\documentclass{amsart}

\usepackage{graphicx}
\usepackage{amsmath}
\usepackage{multicol}
\usepackage{mathrsfs}
\usepackage{amssymb}
\usepackage{xcolor}
\usepackage{enumerate}
\usepackage{bm}
\usepackage{float}
\usepackage{subfig}
\usepackage{fancyvrb}
\usepackage{booktabs}
\usepackage{hyperref}

\usepackage[onehalfspacing]{setspace}
\usepackage[top=100pt,bottom=100pt,left=100pt,right=110pt,footskip=30pt]{geometry}

\allowdisplaybreaks


\theoremstyle{plain}
\newtheorem{theorem}{Theorem}[section]
\newtheorem{corollary}[theorem]{Corollary}
\newtheorem{lemma}[theorem]{Lemma}

\newtheorem*{theorem*}{Theorem}
\newtheorem*{corollary*}{Corollary}
\newtheorem*{lemma*}{Lemma}
\newtheorem*{proposition*}{Proposition}

\theoremstyle{definition}
\newtheorem{definition}[theorem]{Definition}

\newtheorem*{definition*}{Definition}
\newtheorem*{example*}{Example}
\newtheorem*{problem*}{Problem}
\newtheorem*{exercise*}{Exercise}
\newtheorem{remark}{Remark}


\newcommand{\bbE}{\mathbf{E}}

\newcommand{\bbN}{\mathbb{N}}

\newcommand{\bbP}{\mathbf{P}}

\newcommand{\bbR}{\mathbb{R}}

\newcommand{\bfB}{\mathbf{B}}

\newcommand{\bfE}{\mathbf{E}}

\newcommand{\bfW}{\mathbf{W}}

\newcommand{\ind}{\mathbf{1}}

\newcommand{\bfs}{\mathbf{s}}
\newcommand{\bft}{\mathbf{t}}

\newcommand{\bfx}{\mathbf{x}}

\newcommand{\cL}{\mathcal{L}}

\newcommand{\cN}{\mathcal{N}}


\newcommand{\norm}[1]{\left\lVert#1\right\rVert}



\newcommand{\R}{\mathbb{R}}
\newcommand{\N}{\mathbb{N}}

\def\sas{S$\aa$S}

\def\bi{\begin{itemize}}
	\def\ei{\end{itemize}}
\def\ll{\ell_{\text{lev}}}
\def\aa{\alpha}

\def\eps{\epsilon}

\def\be{\begin{equation}}
	\def\ee{\end{equation}}
\def\bea{\begin{eqnarray}}
	\def\eea{\end{eqnarray}}
\def\nn{\nonumber}
\def\ff{\infty}
\def\({\left(}
\def\){\right)}

\def\weak{\stackrel{\text{w}}{\to}}

\def\1{\mathbf{1}}

\def \bx{\mathbf{x}}
\def \bn{\mathbf{n}}
\def \E{\mathbf{E}}
\def \P{\mathbf{P}}

\def \bW{\mathbf{W}}
\def \bX{\mathbf{X}}
\def \bY{\mathbf{Y}}
\def \by{\mathbf{y}}



\author{Paul Jung}
\address[PJ]{KAIST Department of Mathematical Sciences, Daejeon, Korea.}
\email{\url{mailto:pauljung(at)kaist.ac.kr}}
\urladdr{\url{http://mathsci.kaist.ac.kr/~pauljung/}}

\author{Hoil Lee}
\address[HL]{KAIST Department of Mathematical Sciences, Daejeon, Korea.}
\email{\url{mailto:hoil.lee(at)kaist.ac.kr}}

\author{Jiho Lee}
\address[JL]{KAIST Department of Mathematical Sciences, Daejeon, Korea.}
\email{\url{mailto:efidiaf(at)gmail.com}}

\author{Hongseok Yang}
\address[HY]{KAIST School of Computing, Daejeon, Korea.}
\email{\url{mailto:hongseok00(at)gmail.com}}
\urladdr{\url{https://sites.google.com/view/hongseokyang/home}}

\title{{\large $\alpha$-Stable convergence of heavy-tailed infinitely-wide neural networks}}


\begin{document}
\pagestyle{plain}

\begin{abstract} 
        We consider infinitely-wide multi-layer perceptrons (MLPs) which are limits of standard deep feed-forward neural networks. We assume that, for each layer, the weights of an MLP are initialized with i.i.d. samples from either a light-tailed (finite variance) or heavy-tailed distribution in the domain of attraction of a symmetric $\alpha$-stable distribution, where $\alpha\in(0,2]$ may depend on the layer. For the bias terms of the layer, we assume i.i.d. initializations with a symmetric $\alpha$-stable distribution having the same $\alpha$ parameter of that layer. We then extend a recent result of Favaro, Fortini, and Peluchetti (2020), to show that the vector of pre-activation values at all nodes of a given hidden layer converges in the limit, under a  suitable scaling, to a vector of i.i.d. random variables with symmetric $\alpha$-stable distributions.
\end{abstract}

\maketitle

\section{Introduction}

Deep neural networks have brought remarkable progresses in a wide range of applications, such as language translation and speech recognition, but a satisfactory mathematical answer on why they are so effective has yet to come. One promising direction, with a large amount of recent research activity, is to analyze neural networks in an idealized setting where the networks have infinite widths and the so-called step size becomes infinitesimal. In this idealized setting, seemingly intractable questions can be answered. For instance, it has been shown that as the widths of deep neural networks tend to infinity, the networks converge to Gaussian processes, both before and after training, if their weights are initialized with i.i.d. samples from the  Gaussian distribution~\cite{neal1996book,lee2018nngp,matthews2018nngp,novak2019nngp-cnn,yang2019nngp-general}. (The methods used in these works can easily be adapted to show convergence to Gaussian processes when the initial weights are i.i.d. with finite variance.) Furthermore, in this setting, the training of a deep neural network (under the standard mean-squared loss) is shown to achieve zero training error, and the analytic form of a fully-trained network with zero error has been identified~\cite{jacot2018ntk,lee2019ntk}. These results, in turn, enable the use of tools from stochastic processes and differential equations for analyzing deep neural networks in a novel way. They have also led to new high-performing data-analysis algorithms based on Gaussian processes~\cite{lee2020nngp-ntk-experiment}.

We extend this line of research on infinitely-wide deep neural networks by going beyond finite-variance distributions as initializers of network weights. We consider deep networks whose weights in a given layer are allowed to be initialized with i.i.d. samples {from {\it either} a light-tailed (finite variance) or heavy-tailed distribution in the domain of attraction of a symmetric stable distribution, and show that as the widths of the networks increase, the networks at initialization converge to symmetric $\alpha$-stable processes.} Although neural networks with possibly heavy-tailed initializations are not common, their potential for modeling heavy-tailed data was recognized early on by~\cite{wainwright1999scalemixture}, and even the convergence of an infinitely-wide yet shallow neural network under non-Gaussian initialization was shown in the 90's~\cite{neal1996book}. Recently, Favaro, Fortini, and Peluchetti extended such convergence results from shallow to deep networks~\cite{favaro2020stable}. Our work is built on this result, as we next explain.

Favaro et al. considered multi-layer perceptrons (MLPs) having large width $n$, and having i.i.d. weights with a symmetric $\alpha$-stable (\sas) distribution of scale parameter $\sigma_w$. A random variable $X$ is said to have a \sas\ distribution, if its characteristic function takes the form, for $0<\aa\le 2$,
$$\psi_X(t):= \bbE e^{itX} = e^{-|\sigma t|^\alpha}$$
for some constant $\sigma>0$ called the scale parameter. In the special case $\aa=2$, $X$ has a Gaussian distribution with variance $2\sigma^2$ (which differs from standard notation in this case, by a factor of 2).

The results of  Favaro et al. showed that as $n$ tends to $\ff$, the arguments of the nonlinear activation function $\phi$, in any given hidden layer, converge jointly in distribution to a product of \sas($\sigma_\ell$)\ distributions with the same $\alpha$ parameter. The scale parameter $\sigma_\ell$ differs for each layer $\ell$; however, an explicit form was provided as a function of $\sigma_w$, the input $\bx = (x_1,\ldots,x_I)$, and the distribution of bias terms which have a \sas($\sigma_{B}$) distribution for some $\sigma_{B}>0$. They also showed that as a function of $\bx$, the joint distribution described above is an $\alpha$-stable process and described the spectral measure (\cite[Sec. 2.3]{samorodnitsky1994stable}) of this process at points $\bx_1,\ldots,\bx_n$.

Here, we aim to show universality in the sense that the results hold also when the weights are i.i.d. and heavy-tailed, and in the domain of attraction of a \sas\ distribution. Also, part of our goal is to clarify some details of the proof in \cite{favaro2020stable} and fill in the details of one nontrivial step in the proof of \cite{favaro2020stable} (for instance our use of Lemma \ref{lem:mixing measure}). Furthermore, we will consider a slightly more general case where the $\alpha$ parameter for the weights may depend on the layer it is in, {including the case where it may be that $\alpha=2$ for some layers.}

\vspace{2mm}
\paragraph{\bf Notation} Let $\Pr(\bbR)$ be the set of probability distributions on $\R$. In the sequel, for $\alpha\in(0,2]$, let $\mu_{\alpha,\sigma}\in \Pr(\bbR)$ denote a \sas($\sigma$) distribution. We will typically use capital letters to denote random variables in $\R$.
For example, the random weights of our MLPs in layer $\ell$ are denoted $(W_{ij}^{(\ell)})_{ij}$ which are henceforth assumed to be in the domain of attraction of $\mu_{\alpha,\sigma}$, which may depend on $\ell$. One notable exception to this convention is our use of the capital letter $L$ to denote a slowly varying function.

\section{The Model: Heavy-tailed Multi-layer Perceptrons}

At a high level, a neural network is just a parameterized function $Y$ from the inputs in $\R^I$ to outputs in $\R^O$ for some $I$ and $O$. In this article, we consider the case that $O = 1$. The parameters $\Theta$, of the function, consist of real-valued vectors $\bfW$ and $\bfB$, called weights and biases. These parameters are initialized randomly, and get updated repeatedly during the training of the network. 
We adopt the common notation $Y_\Theta(\bx)$, and express that the output of $Y$ depends on both the input $\bx$ and the parameters  {$\Theta=(\bfW,\bfB)$}. 

Note that since $\Theta$ is set randomly, $Y_\Theta$ is a random function. This random-function viewpoint is the basis of a large body of work on Bayesian neural networks~\cite{neal1996book}, which studies the distribution of this random function or its posterior conditioned on input-output pairs in training data. Our work falls into this body of work. We analyze the distribution of the random function $Y_\Theta$ at the moment of initialization.
Our analysis is in the situation where $Y_\Theta$ is defined by an MLP, the width of the MLP is large (so the number of parameters in $\Theta$ is large), and the parameters $\Theta$ are initialized by possibly using heavy-tailed distributions. The precise description of the setup is given below.
\begin{enumerate}
\item (Weights and Biases)
The MLP is fully connected, and the weights on the edges from layer $\ell-1$ to $\ell$ are given by $ \bW^{(\ell)} = (W^{(\ell)}_{ij})_{ij \in \N^2}$.
Assume that $\bW^{(\ell)}$ is a collection of i.i.d. symmetric random variables such that for each layer $\ell$,
\begin{itemize}
	\item[(2.1.a)] they are heavy-tailed, i.e. for all $t>0$,
	\begin{align} \label{eq:ht}
	\bbP(|W^{(\ell)}_{ij}| > t) = t^{-\alpha_\ell}L^{(\ell)}(t),\qquad  {\text{for some }\alpha_\ell\in(0,2]},
	\end{align}
	where $L^{(\ell)}$ is some slowly varying function, or
	\item[(2.1.b)] $\E |W^{(\ell)}_{ij}|^{2}<\infty$. (In this case, we set $ \alpha_{\ell} = 2 $ by default.)
\end{itemize}
Note that both (2.1.a) and (2.1.b) can hold at the same time.
Even when this happens, there is no ambiguity about $\alpha_{\ell}$, which is set to be $2$ in both cases. Our proof deals with the cases when $ \alpha_{\ell} <2 $ and $\alpha_{\ell} =2 $ separately. (See below, the definition of $ L_{0}$.)
We permit both the conditions (2.1.a) and (2.1.b) to emphasize that our result covers a mixture of both heavy-tailed and finite variance (light-tailed) initializations.

Let $B^{(\ell)}_{i}$ be i.i.d. random variables with distribution $\mu_{\alpha_\ell,\sigma_{B^{(\ell)}}}.$ 
Note that the distribution of $B^{(\ell)}_i$ is more constrained than that of $W^{(\ell)}_{ij}$. This is because the biases are not part of the normalized sum, and normalization is, of course, a crucial part of the stable limit theorem.

For later use in the $ \alpha=2 $ case, we define a function $ \widetilde{L}^{(\ell)} $ by
{\begin{align*}
	\widetilde{L}^{(\ell)}(x)
        := \int_{0}^{x} y \P(|W_{ij}^{(\ell)}|>y) \, dy.
\end{align*}
Note that $ \widetilde{L}^{(\ell)} $ is increasing. For case (2.1.b), $\int_{0}^{x} y \P(|W_{ij}^{(\ell)}|>y) \, dy$ converges to a constant, namely to  $1/2$ of the variance, and thus it is slowly varying. For case (2.1.a), it is seen in Lemma \ref{lem: tilde L} that $ \widetilde{L}^{(\ell)} $ is slowly varying as well.}

For convenience, let
\begin{align*}
	L_{0} :=
	\begin{cases}
		L^{(\ell)} \quad &\text{if } \aa_{\ell} < 2 \\
		\widetilde{L}^{(\ell)} &\text{if } \aa_{\ell} = 2.
	\end{cases}
\end{align*}
We have dropped the superscript $ \ell $ from $L_0$ as the dependence on $ \ell $ will be assumed.

\item (Layers)
We suppose that there are $\ll$ layers, not including those for the input and output. The $0$-th layer is for the input and consists of $I$ nodes assigned with deterministic values from the input $\bx=(x_1,\ldots,x_I)$. We assume for simplicity that $x_i\in\R$.\footnote{None of our methods would change if we instead let $x_i\in\R^d$ for arbitrary finite $d$.} The layer $\ll+1$ is for the output.
\item
(Scaling) Fix a layer $\ell$ with $2\le \ell\le\ll+1$, and let $n$ be the number of nodes at the layer $\ell-1$. We will scale the random values at the nodes (pre-activation) by
\begin{align*}
	a_{n}(\ell):= \inf\{ t > 0 \colon t^{-\alpha_\ell}L_{0}(t) \le n^{-1} \}.
\end{align*}
Then, $ a_{n}(\ell)$ tends to $\infty$ as $n$ increases.
For future purposes we record the well-known fact that, for $a_n=a_n(\ell)$,
                \footnote{For case (2.1.b), $t^2 L_0(t)$ becomes continuous and so $n a_n^{-\alpha_{\ell}}L_0(a_n)$ is simply $1$.  To see the convergence in case (2.1.a), first note that as $ \bbP(|W^{(\ell)}_{ij}|>t) = t^{-\alpha_{\ell}}L^{(\ell)}(t) $ is right-continuous, $ n a_{n}^{-\alpha_{\ell}} L^{(\ell)}(a_{n}) \le 1 $.
	For the reverse inequality, note that by \eqref{eq:ht} and the definition of $a_n$, for $n$ large enough we have $\bbP\left(|W^{(\ell)}_{ij}| > \frac{1}{1+\epsilon}a_{n}\right) \geq 1/n$, and by the definition of slowly varying that,
	\begin{align*}
        (1+2\epsilon)^{-\alpha_{\ell}}
	= \lim_{n\to \infty} \frac{\bbP\left(|W^{(\ell)}_{ij}| > \frac{1+2\epsilon}{1+\epsilon}a_{n}\right)}{\bbP\left(|W^{(\ell)}_{ij}| > \frac{1}{1+\epsilon}a_{n}\right)}
	\le \liminf_{n\to\infty} \frac{\bbP\left(|W^{(\ell)}_{ij}| > a_{n}\right)}{1/n}  .
	\end{align*}.
}
\begin{align}\label{a_n asymptotics}
\lim_{n\to\ff}n a_{n}^{-\alpha_\ell} L_0(a_{n}) = 1 .
\end{align}

\item (Activation)
The MLP uses a nonlinear activation function $\phi(y)$. We assume that $ \phi $ is continuous and bounded. The boundedness assumption simplifies our presentation, and in Section~\ref{sec: Relaxing the Boundedness Assumption}, we relax this assumption so that for particular initializations (such as Gaussian or stable), more general activation functions such as ReLU are allowed.

\item (Hidden Layers) For layer $\ell$ with $1\le\ell\le\ll$, there are $n_\ell$ nodes for some $n_\ell \ge 2$. We write $\bn=(n_1,\ldots, n_{\ll})\in\bbN^{\ll}$. For $ \ell $ with $ 1\le\ell\le\ll+1 $, the pre-activation values at these nodes are given, for an input $ \bx \in \bbR^{I} $, recursively by
\begin{align*}
&Y^{(1)}_{i}(\bx;\bn) := Y^{(1)}_{i}(\bx) := \sum_{j=1}^{I}W^{(1)}_{ij} x_{j} + B^{(1)}_{i}, \\
&Y^{(\ell)}_{i}(\bx;\bn) := \frac{1}{a_{n_{\ell-1}}(\ell)} \sum_{j=1}^{n_{\ell-1}} W^{(\ell)}_{ij} \phi(Y^{(\ell-1)}_{j}(\bx;\bn)) + B^{(\ell)}_{i} , \quad \ell \ge 2
\end{align*}
for each $ n_{\ell-1} \in \bbN $ and $ i \in \bbN $. We often omit $\bn$ and write $Y^{(\ell)}_{i}(\bx)$. When computing the output of the MLP with widths $\bn$, one only needs to consider $i\le n_{\ell}$ for each layer $\ell$. However, it is always possible to assign values to an extended MLP beyond $\bn$ which is why we have assumed more generally that $ i \in \bbN $. This will be important for the proofs as explained in Remark \ref{rem:1} below.

{Note that $Y^{(\ell)}_{i}(\bx;\bn)$ depends on only the coordinates $n_1,\ldots,n_{\ell-1}$, but we may simply let it be constant in the coordinates $n_\ell,\ldots,n_{\ll}$. This will often be the case when we have functions of $\bn$ in the sequel.}
\item (Limits) We consider one MLP for each $\bn\in\bbN^{\ll}$. We take the limit of the collection of these MLPs in such a way that 
\begin{align}\label{n to infinity}
\min(n_1,\ldots,n_{\ll})\to\infty.
\end{align}
 (Our methods can also handle the case where limits are taken from left to right, i.e., $\lim_{n_{\ll}\to\infty}\cdots\lim_{n_1\to\infty}$, but since this order of limits is easier to prove, we will focus on the former.)

\end{enumerate}

\section{Convergence to $\alpha$-stable Distributions}
		
		Our main results are summarized in the next theorem and its extension to the situation of multiple inputs in Theorem~\ref{theorem:multivariate-main} in Section~\ref{section:joint}. They show that as the width of an MLP tends to infinity, the MLP becomes a relatively simple random object: the outputs of its $\ell$-th layer become just i.i.d. random variables drawn from a stable distribution, and the parameters of the distribution have explicit inductive characterizations.
		
				Let $$c_\alpha := \lim_{M \to \infty} \int_{0}^{M} \frac{\sin u}{u^{\alpha}} \, du \quad \text{for } \aa < 2 \quad \text{and} \quad c_{2}=1.$$
		
		\begin{theorem}\label{prop:main}
			For each $ \ell = 2,\ldots,\ll+1 $, the joint distribution of $ (Y^{(\ell)}_{i}(\bx;\bn))_{i \ge 1}$ converges weakly to $\bigotimes_{i \ge 1} \mu_{\alpha_\ell,\sigma_\ell} $ as $\min(n_1,\ldots,n_{\ll})\to\infty$, with $ \sigma_{\ell} $ inductively defined by
			\begin{align*}
			\sigma_{2}^{\alpha_2}&:= \sigma_{B^{(2)}}^{\alpha_2} + c_{\alpha_2} \int |\phi(y)|^{\alpha_2} \, \nu^{(1)}(dy), \quad \ell = 2, \\
				\sigma_{\ell}^{\alpha_\ell} &:= \sigma_{B^{(\ell)}}^{\alpha_\ell} + c_{\alpha_\ell} \int |\phi(y)|^{\alpha_\ell} \, \mu_{\alpha_{\ell-1},\sigma_{\ell-1}}(dy), \quad \ell = 3,\ldots,\ll+1
			\end{align*}
			where $\nu^{(1)}$ is the distribution of  $Y^{(1)}_{1}(\bx)$.
			That is, the characteristic function of the limiting distribution 
                        is, for any finite subset $ \cL \subset \bbN $,
			\begin{align*}
			&\prod_{i\in\cL} \psi_{B^{(2)}}(t_i)\exp\left( -c_{\alpha_2}  |t_{i}|^{\alpha_2} \int |\phi(y)|^{\alpha_2} \, \nu^{(1)}(dy) \right) , \quad \ell = 2, \\
			&\prod_{i\in\cL} \psi_{B^{(\ell)}}(t_i)\exp\left( -c_{\alpha_\ell}  |t_{i}|^{\alpha_\ell} \int |\phi(y)|^{\alpha_\ell} \, \mu_{\alpha_{\ell-1},\sigma_{\ell-1}}(dy) \right) , \quad \ell = 3,\ldots,\ll+1 .
			\end{align*}
		\end{theorem}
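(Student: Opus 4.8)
The plan is to induct on the layer index $\ell$, the base case $\ell=2$ being the special instance of the inductive step in which the ``previous layer'' is the array of genuinely i.i.d.\ variables $Y^{(1)}_j(\bx)\sim\nu^{(1)}$ rather than a merely exchangeable one. Fix a finite $\cL\subset\bbN$ and reals $(t_i)_{i\in\cL}$. By the L\'evy continuity theorem, together with the fact that weak convergence on $\bbR^{\bbN}$ with the product topology is equivalent to convergence of all finite-dimensional characteristic functions, it suffices to show
\[
\bbE\exp\Bigl(\sqrt{-1}\,{\textstyle\sum_{i\in\cL}}\,t_i\,Y^{(\ell)}_i(\bx;\bn)\Bigr)\ \longrightarrow\ \prod_{i\in\cL}\exp\bigl(-|\sigma_\ell t_i|^{\alpha_\ell}\bigr)\quad\text{as }\min(n_1,\dots,n_{\ll})\to\infty,
\]
and then to note that, by the definition of $\sigma_\ell$ and $\psi_{B^{(\ell)}}(t)=e^{-|\sigma_{B^{(\ell)}}t|^{\alpha_\ell}}$, the right-hand side is exactly the characteristic function asserted in the theorem and is that of $\bigotimes_{i\in\cL}\mu_{\alpha_\ell,\sigma_\ell}$.

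First I would condition on the $\sigma$-algebra $\cF_{\ell-1}$ generated by \emph{all} pre-activations of layer $\ell-1$ of the extended MLP (this is where the convention $i\in\bbN$ and Remark~\ref{rem:1} enter). Since $\bW^{(\ell)}$ and $(B^{(\ell)}_i)_i$ are independent of $\cF_{\ell-1}$ and the $W^{(\ell)}_{ij}$ are i.i.d.\ in both indices, the conditional characteristic function factorizes:
\[
\bbE\Bigl[\exp\bigl(\sqrt{-1}\,{\textstyle\sum_{i\in\cL}}t_iY^{(\ell)}_i\bigr)\,\Big|\,\cF_{\ell-1}\Bigr]=\prod_{i\in\cL}\psi_{B^{(\ell)}}(t_i)\prod_{i\in\cL}\prod_{j=1}^{n_{\ell-1}}\psi_{W}\!\left(\frac{\phi(Y^{(\ell-1)}_j)\,t_i}{a_{n_{\ell-1}}(\ell)}\right),
\]
where $\psi_W$ is the real, even characteristic function of a layer-$\ell$ weight. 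Because $\phi$ is bounded and $a_{n_{\ell-1}}(\ell)\to\infty$, every argument of $\psi_W$ above is uniformly small, so $\psi_W>0$ there and $\log\psi_W(s)=-(1-\psi_W(s))+O\bigl((1-\psi_W(s))^2\bigr)$; since $1-\psi_W(\phi(Y^{(\ell-1)}_j)t_i/a_{n_{\ell-1}})=O(1/n_{\ell-1})$ uniformly in $j$, the squared-error terms contribute $O(1/n_{\ell-1})$ in total and are negligible. The problem thus reduces to identifying the limit in probability of $\sum_{j=1}^{n_{\ell-1}}\bigl(1-\psi_W(\phi(Y^{(\ell-1)}_j)t_i/a_{n_{\ell-1}}(\ell))\bigr)$ for each $i$.

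This I would obtain from two ingredients. The first is a domain-of-attraction estimate: for $\alpha_\ell<2$, writing $1-\psi_W(s)=\int_{\bbR}(1-\cos sw)\,\bbP(W^{(\ell)}_{ij}\in dw)$, integrating by parts into $\int_0^\infty s\sin(sw)\,\bbP(|W^{(\ell)}_{ij}|>w)\,dw$ and substituting $u=sw$ gives $1-\psi_W(s)\sim c_{\alpha_\ell}|s|^{\alpha_\ell}L^{(\ell)}(1/|s|)$ as $s\to 0$, the constant being $\int_0^\infty\sin u\,u^{-\alpha_\ell}\,du=c_{\alpha_\ell}$; the uniform convergence theorem for slowly varying functions, with Potter's bounds taming the ranges $u\to 0$ and $u\to\infty$, makes this rigorous. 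For $\alpha_\ell=2$ the same scheme yields instead $1-\psi_W(s)\sim s^2\widetilde{L}^{(\ell)}(1/s)$ via $\bbE\bigl[(W^{(\ell)}_{ij})^2\,\1_{|W^{(\ell)}_{ij}|\le x}\bigr]\sim 2\widetilde{L}^{(\ell)}(x)$ (using Lemma~\ref{lem: tilde L}), consistent with $c_2=1$. In view of the normalization \eqref{a_n asymptotics}, this is equivalent to vague convergence on $\bbR\setminus\{0\}$ of $n\,\bbP(W^{(\ell)}_{ij}/a_n(\ell)\in\cdot)$ to the symmetric $\alpha_\ell$-stable L\'evy measure (for $\alpha_\ell=2$ one uses the finite-variance/CLT form directly). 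The second ingredient is Lemma~\ref{lem:mixing measure}: the empirical measure $\tfrac1n\sum_{j=1}^{n}\delta_{Y^{(\ell-1)}_j}$ converges weakly, in probability, to $\mu_{\alpha_{\ell-1},\sigma_{\ell-1}}$ (to $\nu^{(1)}$ when $\ell=2$). Granting both, rewriting the sum as $\int_{\bbR}\bigl[\tfrac1n\sum_{j}(1-\cos(\phi(Y^{(\ell-1)}_j)t_i v))\bigr]\,n\,\bbP(W^{(\ell)}_{ij}/a_n(\ell)\in dv)$ and controlling it by a routine truncation — using $|1-\cos x|\le 2\wedge\tfrac12 x^2$ and the uniform-in-$n$ tail bound $n\,\bbP(|W^{(\ell)}_{ij}|>a_n(\ell)x)\le C(x^{-\alpha_\ell+\delta}\vee x^{-\alpha_\ell-\delta})$ from Potter's bounds — gives
\[
\sum_{j=1}^{n_{\ell-1}}\Bigl(1-\psi_W\bigl(\tfrac{\phi(Y^{(\ell-1)}_j)t_i}{a_{n_{\ell-1}}(\ell)}\bigr)\Bigr)\ \longrightarrow\ c_{\alpha_\ell}\,|t_i|^{\alpha_\ell}\!\int|\phi(y)|^{\alpha_\ell}\,\mu_{\alpha_{\ell-1},\sigma_{\ell-1}}(dy)\quad\text{in probability},
\]
where the identification of the constant uses $\int_{\bbR}(1-\cos(cv))\,\tfrac{\alpha_\ell}{2}|v|^{-\alpha_\ell-1}\,dv=c_{\alpha_\ell}|c|^{\alpha_\ell}$. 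Consequently the conditional characteristic function above converges in probability to the \emph{constant} $\prod_{i\in\cL}\psi_{B^{(\ell)}}(t_i)\exp\bigl(-c_{\alpha_\ell}|t_i|^{\alpha_\ell}\int|\phi|^{\alpha_\ell}\,d\mu_{\alpha_{\ell-1},\sigma_{\ell-1}}\bigr)$; being bounded by $1$, bounded convergence transfers this to the unconditional characteristic function, whose limit equals $\prod_{i\in\cL}e^{-|\sigma_\ell t_i|^{\alpha_\ell}}$ by the definition of $\sigma_\ell$. This closes the induction, and the product form of the limit is precisely the asymptotic independence across the nodes of layer $\ell$.

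The step I expect to be the main obstacle is Lemma~\ref{lem:mixing measure}, the ``nontrivial step'' flagged in the introduction. For finite widths the $(Y^{(\ell-1)}_j)_j$ are \emph{not} independent, since all of them are built from the shared lower layers, so no ordinary law of large numbers applies directly; they are, however, exchangeable for each fixed $\bn$ (in fact conditionally i.i.d.\ given layer $\ell-2$, with layer $1$ genuinely i.i.d.). For bounded continuous $g$ this gives $\var\bigl(\tfrac1n\sum_{j\le n}g(Y^{(\ell-1)}_j)\bigr)\le \tfrac1n\var\bigl(g(Y^{(\ell-1)}_1)\bigr)+\bigl|\cov\bigl(g(Y^{(\ell-1)}_1),g(Y^{(\ell-1)}_2)\bigr)\bigr|$, and the inductive hypothesis — weak convergence of the two-node marginal to a \emph{product} — forces the covariance, hence the variance, to vanish as $\bn\to\infty$, while the one-node marginal gives $\bbE\,g(Y^{(\ell-1)}_1)\to\int g\,d\mu_{\alpha_{\ell-1},\sigma_{\ell-1}}$; applying this to a convergence-determining countable family of $g$'s yields the lemma. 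A secondary, purely technical obstacle is making the two interchanges of limits (the linearization of $\log\psi_W$ and the passage to the L\'evy measure) rigorous uniformly over the bounded range of the coefficients $\phi(Y^{(\ell-1)}_j)$, for which the uniform convergence theorem and Potter's bounds for slowly varying functions are the right tools, treating $\alpha_\ell<2$ and $\alpha_\ell=2$ (where $L_0=\widetilde{L}^{(\ell)}$) separately as the statement indicates.
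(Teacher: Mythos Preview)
Your proof is correct and shares the paper's overall architecture --- induction on $\ell$, conditioning on the previous layer, the domain-of-attraction expansion $1-\psi_W(s)\sim c_{\alpha}|s|^\alpha L_0(1/|s|)$ (the paper's Lemma~\ref{lem: pitman approximation}), and Potter-type bounds (Lemma~\ref{lem: pitman slowly varying bound}) to justify the interchanges --- but the route through the ``law of large numbers'' step is genuinely different. The paper invokes de~Finetti's theorem to produce the directing random measure $\xi^{(\ell-1)}$, rewrites the conditional characteristic function as $\bigl(\int\psi_W(\cdot)\,\xi^{(\ell-1)}(dy)\bigr)^n$, and then appeals to Lemma~\ref{lem:mixing measure} (convergence of mixing measures on $\Pr(\Pr(\bbR))$) to deduce $\xi^{(\ell-1)}\Rightarrow\mu_{\alpha_{\ell-1},\sigma_{\ell-1}}$ in probability; the passage to the limit of the integral then goes through a Skorokhod/Vitali uniform-integrability argument. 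You instead stay with the raw product $\prod_j\psi_W(\phi(Y^{(\ell-1)}_j)t_i/a_n)$, take logarithms, and control the resulting sum via the \emph{empirical} measure $\tfrac1n\sum_j\delta_{Y^{(\ell-1)}_j}$, whose weak convergence in probability you obtain from the elementary exchangeable-variance identity $\var(\tfrac1n\sum_j g(Y^{(\ell-1)}_j))=\tfrac1n\var g(Y^{(\ell-1)}_1)+\tfrac{n-1}{n}\cov(g(Y^{(\ell-1)}_1),g(Y^{(\ell-1)}_2))$ together with the two-node product limit from the induction hypothesis. This second-moment argument is more hands-on and avoids the somewhat abstract Lemma~\ref{lem:mixing measure} entirely (note that what that lemma actually asserts is convergence of the \emph{mixing} measures $\pi_j$, not the empirical measures; your variance computation is a self-contained substitute, not an application of it). Conversely, the paper's de~Finetti route separates the two sources of randomness more cleanly and generalizes more readily, e.g.\ to the unbounded-$\phi$ setting of Section~\ref{sec: Relaxing the Boundedness Assumption}, where one really wants to work with the conditional law $\xi^{(\ell-1)}$ rather than a bare empirical average. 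The remaining differences --- your $\log$-expansion versus the paper's $(1-c/n+o(1/n))^n\to e^{-c}$, and your reformulation through vague convergence of $n\,\bbP(W/a_n\in\cdot)$ to the L\'evy measure --- are cosmetic repackagings of the same analytic content.
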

	
	{\begin{remark}\label{rmk: boundedness of phi}
		The integrals in Theorem \ref{prop:main} are well-defined since $ \phi $ is bounded. For (possibly) unbounded $ \phi $, these integrals are well-defined as well under suitable assumption on $ \phi $. See Section~\ref{sec: Relaxing the Boundedness Assumption}.
	\end{remark}}
	
	\begin{remark}\label{rem:1}
		Before embarking on the proof, let us make an important remark.
	 	For each $\bn=(n_1,\ldots,n_{\ll})$, the MLP is finite and each layer has finite width. A key part of the proof is the application of de Finetti's theorem at each layer, which applies only in the case where one has an infinite sequence of random variables (for a given layer, our sequence is such that there is one random variable at each node). 
		As in \cite{favaro2020stable}, a crucial observation is that for each $\bn=(n_1,\ldots,n_{\ll})$, we can extend the MLP to an infinite-width MLP by adding an infinite number of nodes at each layer that compute values in the same manner as nodes of the original MLP, but are ignored by nodes at the next layer. Thus, the finite-width MLP is embedded in an infinite-width MLP. This allows us to use de Finetti's theorem.
		\end{remark}

\vspace{2mm}
\paragraph{\bf Heuristic of the proof}
The main takeaway of the theorem is that, even though the random variables {$(Y_i^{(\ell)}(\bx;\bn))_{i \in \N}$ are dependent through the randomness of the former layer's outputs $(Y_j^{(\ell-1)}(\bx;\bn))_{j \in \N}$,} as the width grows to infinity, this dependence vanishes via an averaging effect.
Let us briefly highlight the key technical points involved in establishing this vanishing dependence on a heuristic level. 

By de Finetti's theorem, {for each $\bn$} there exists a random distribution  {$\xi^{(\ell-1)}(dy {;\bn})$} such that the sequence $(Y_j^{(\ell-1)}(\bx))_j$ is conditionally i.i.d. with common random distribution $\xi^{(\ell-1)}.$ By conditioning on $\xi^{(\ell-1)}$, we obtain independence among the summands of $$Y^{(\ell)}_i(\bx) = \frac{1}{a_{n_{\ell-1}}(\ell)} \sum_{j=1}^{n_{\ell-1}} W^{(\ell)}_{ij} \phi(Y^{(\ell-1)}_{j}(\bx)) + B^{(\ell)}_{i}$$
as well as independence among the family $(Y_i^{(\ell)}(\bx))_i$. Let
$\alpha := \alpha_\ell$, $n := n_{\ell - 1}$, and $a_n := a_{n_{\ell - 1}}(\ell)$.
With the help of Lemma \ref{lem: pitman approximation}, the conditional characteristic function of $Y_1^{(\ell)}(\bx)$ given $\xi^{(\ell-1)}$ is asymptotically equal to
\begin{align}\label{eq:heuristic}
  e^{-\sigma_{B}^{\alpha} |t|^{\alpha}} \left( 1 - \frac{b_n }{n} c_\alpha |t|^\alpha \int |\phi(y)|^{\alpha}\frac{L_{0}\left( \frac{a_{n}}{|\phi(y)t|} \right)}{L_{0}(a_{n})} \, \xi^{(\ell-1)}(dy {;\bn}) \right)^n,
\end{align}
where $b_n$ is a deterministic constant that tends to one. Assuming the inductive hypothesis, the random distribution $\xi^{(\ell-1)}$ converges weakly to $\mu_{\alpha_{\ell-1}, \sigma_{\ell-1}}$   {as $\bn\to\infty$} in the sense of \eqref{n to infinity}, by Lemma \ref{lem:mixing measure}. Since $L_0$ is slowly varying, one can surmise that the conditional characteristic function tends to
$$ \exp \left( -\sigma_{B}^{\alpha} |t|^{\alpha} -c_\alpha |t|^\alpha \int |\phi(y)|^\alpha \mu_{\alpha_{\ell-1}, \sigma_{\ell-1}}(dy)\right),$$
which is the characteristic function of the stable law we desire. To make the above intuition rigorous, the convergence of \eqref{eq:heuristic} is verified by proving uniform integrability of the integrand $|\phi(y)|^{\alpha}\frac{L_{0}\left( \frac{a_{n}}{|\phi(y)t|} \right)}{L_{0}(a_{n})}$ with respect to the family of distributions $\xi^{(\ell-1)}$ over the indices $\bn$. Namely, by Lemma \ref{lem: pitman slowly varying bound}, the integrand can be bounded by $O(|\phi(y)|^{\alpha \pm \epsilon})$ for small $\epsilon>0$ and uniform integrability follows from the boundedness of $ \phi $. The joint limiting distribution converges to the desired stable law by similar arguments.

			\begin{proof}[Proof of Theorem \ref{prop:main}]
				
				We start with a useful expression for the characteristic function conditioned on the random variables $ \{ Y^{(\ell-1)}_{j}(\bx) \}_{j=1,\ldots,n_{\ell-1}} $:
                                \begin{align}
                                        \label{eq: conditional ch f}
                                                & \psi_{Y^{(\ell)}_{i}(\bx) | \{ Y^{(\ell-1)}_{j}(\bx) \}_{j}}(t)
                                                := \bbE\left[ \left. \exp\left ( it Y^{(\ell)}_i(\bx)\right ) \right| \{ Y^{(\ell-1)}_{j}(\bx)\}_j \right] 
                                                \\
                                                \nonumber
                                                &\qquad\qquad {} = \bbE\left[ \left. \exp\left ( it \left \{ \frac{1}{a_{n_{\ell-1}}(\ell)} \sum_{j=1}^{n_{\ell-1}} W^{(\ell)}_{ij} \phi(Y^{(\ell-1)}_{j}(\bx)) + B^{(\ell)}_{i} \right \} \right ) \right| \{ Y^{(\ell-1)}_{j}(\bx)\}_j \right] 
                                                \\
                                                \nonumber
                                                &\qquad\qquad {} = e^{-\sigma |t|^{\alpha_\ell}} \prod_{j=1}^{n_{\ell-1}} \psi_{W^{(\ell)}_{ij}}\left (\frac{\phi(Y^{(\ell-1)}_{j}(\bx))}{a_{n_{\ell-1}}(\ell)}t\right )
                                \end{align}
								where $\sigma:=\sigma_{B^{(\ell)}}^{\alpha_\ell}$ and the argument on the right-hand side is random.
				
				\noindent{\bf Case $\ell=2$:}
				
				Let us first consider the case $ \ell=2 $. Let $n=n_1, \alpha=\alpha_2$, $a_n=a_{n_1}(2)$, and $t\neq 0$. 
				We first show the weak convergence of the one-point marginal distributions, i.e., we show that the distribution of $ Y^{(2)}_{i}(\bx)$ converges weakly to $\mu_{\alpha,\sigma}$ for each $ i $.
				Since $ Y^{(1)}_{j}(\bx), j=1,\ldots,n $ are i.i.d., this is a straight-forward application of standard arguments, which we include for completeness.
				Denote the common distribution of  $ Y^{(1)}_{j}(\bx), j=1,\ldots,n $  by $\nu^{(1)}$.
				Taking the expectation of \eqref{eq: conditional ch f} with respect to the randomness of $ \{Y^{(1)}_{j}(\bx)\}_{j=1,\ldots,n} $,
				\begin{align*}
					\psi_{Y^{(2)}_{i}(\bx)}(t)
					&= e^{-\sigma_{B^{(2)}}^{\alpha} |t|^{\alpha}} \left( \int \psi_W\left (\frac{\phi(y)}{a_{n}}t\right ) \, \nu^{(1)}(dy) \right)^{n}
				\end{align*}
					where $\psi_{W}:=\psi_{W_{ij}^{(2)}}$ for some/any $i,j$.
				From Lemma \ref{lem: pitman approximation}, we have that
				\begin{align*}
					\psi_W(t)
					= 1 - c_\alpha|t|^{\alpha} L_{0}\left( \frac{1}{|t|} \right) + o\left(|t|^{\alpha} L_{0}\left( \frac{1}{|t|} \right)\right), \quad |t| \to 0
				\end{align*}
				for $ c_{\alpha} = \lim_{M\to \infty}\int_{0}^{M} \sin u / u^{\alpha} \, du $ when $ \alpha < 2 $ and $ c_{2} =1 $.
				If $\phi(y)=0$ then $\psi_W\left (\frac{\phi(y)}{a_{n}}t\right )=1$.
				Otherwise, setting $ b_n  := n a_{n}^{-\alpha}L_{0}(a_{n})$, for fixed $ y $ with $ \phi(y) \ne 0 $  we have that, as $ n \to \infty $,
				\begin{align}\label{eq: phi expansion}
					\psi_W\left (\frac{\phi(y)}{a_{n}}t\right )
					= 1 - c_\alpha \frac{b_n }{n} |\phi(y) t|^{\alpha} \frac{L_{0}\left( \frac{a_{n}}{|\phi(y)t|} \right)}{L_{0}(a_{n})} + o\left( \frac{b_n }{n} |\phi(y) t|^{\alpha} \frac{L_{0}\left( \frac{a_{n}}{|\phi(y)t|} \right)}{L_{0}(a_{n})} \right).
				\end{align}
				By Lemma \ref{lem: pitman slowly varying bound} applied to $ G(x) := x^{-\alpha}L_{0}(x) $ and $ c=1 $, 
                                for any $\epsilon > 0$, there exist constants $ b > 0$ and $ n_0 $ such that for all $ n > n_0 $ and all $y$ with $\phi(y) \neq 0$,
				\begin{align}
                                        \label{eqn:prop-main:base-case:upper-bound}
					|\phi(y) t|^{\alpha} \frac{L_{0}\left( \frac{a_{n}}{|\phi(y)t|} \right)}{L_{0}(a_{n})} 
                                        = \frac{G\left(\frac{a_{n}}{|\phi(y)t|}\right)}{G(a_{n})}
					\leq b |\phi(y)t|^{\alpha \pm \epsilon},
				\end{align}
				where $|\cdot|^{\alpha\pm\epsilon}$ denotes the maximum of $|\cdot|^{\alpha +\epsilon}$ and $|\cdot|^{\alpha -\epsilon}$.

				Since $ \phi $ is bounded, the right-hand side of (\ref{eq: phi expansion}) is term-by-term integrable with respect to $ \nu^{(1)}(dy) $. In particular, the integral of the error term can be bounded, for some small $ \epsilon $ and large enough $n$, by 
				\begin{align*}
					\int o\(\frac{b_n }{n} |\phi(y) t|^{\alpha} \frac{L_{0}\left( \frac{a_{n}}{|\phi(y)t|} \right)}{L_{0}(a_{n})}\)\, \nu^{(1)}(dy)
					\le o\(b \frac{b_n }{n} \int |\phi(y)t|^{\alpha \pm \epsilon} \, \nu^{(1)}(dy)\)
					= o\(\frac{b_n }{n} \) .
				\end{align*}
                              (Set $|\phi(y)|^{\alpha}L_{0}(\frac{a_{n}}{|\phi(y)|})=0$ when $\phi(y)=0$.)
				Thus, integrating both sides of (\ref{eq: phi expansion}) with respect to $ \nu^{(1)}(dy) $ and taking the $ n $-th power, it follows that
				\begin{align*}
					\left(\int \psi_W\left (\frac{\phi(y)t}{a_{n}}\right ) \, \nu^{(1)}(dy)\right)^{n}
					= \left(1 - c_\alpha\frac{b_n }{n} \int |\phi(y)t|^{\alpha}\frac{L_{0}\left( \frac{a_{n}}{|\phi(y)t|} \right)}{L_{0}(a_{n})} \, \nu^{(1)}(dy) + o\left ( \frac{b_n }{n} \right )\right)^{n} .
				\end{align*}
                From the bound in \eqref{eqn:prop-main:base-case:upper-bound}, we have, by dominated convergence, that as $n\to\infty$
				\begin{align*}
					\int |\phi(y)t|^{\alpha}\frac{L_{0}\left( \frac{a_{n}}{|\phi(y)t|} \right)}{L_{0}(a_{n})} \, \nu^{(1)}(dy)
					\to |t|^{\alpha} \int |\phi(y)|^{\alpha} \, \nu^{(1)}(dy).
				\end{align*}
				Since $  b_n  = n a_{n}^{-\alpha}L_{0}(a_{n})$ converges to $1$ by \eqref{a_n asymptotics}, we have that
				\begin{align*}
					\left(\int \psi_W\left (\frac{\phi(y)t}{a_{n}}\right ) \, \nu^{(1)}(dy)\right)^{n}
					\to \exp\left( -c_\alpha|t|^{\alpha} \int |\phi(y)|^{\alpha} \, \nu^{(1)}(dy) \right) .
				\end{align*}
				Thus, the distribution of $ Y^{(2)}_{i}(\bx) $ weakly converges to $\mu_{\alpha,\sigma_2}$ where
				\begin{align*}
					\sigma_2^{\alpha} &= \sigma_{B^{(2)}}^{\alpha} + c_\alpha \int |\phi(y)|^{\alpha} \, \nu^{(1)}(dy)
				\end{align*} as desired.
				
				Next we prove that the joint distribution of $ (Y^{(2)}_{i}(\bx))_{i \ge 1}$ converges to the product distribution $\bigotimes_{i \ge 1} \mu_{\alpha,\sigma_2}$.
				Let $ \cL \subset \bbN $ be a finite set.
				{Let $\psi_B$} denote the multivariate characteristic function for the $|\cL|$-fold product distribution of $\mu_{\alpha,\sigma_{B^{(2)}}}.$
				For $ \bft = (t_{i})_{i\in\cL} $, conditionally on $ \{ Y^{(1)}_{j}(\bx) \}_{j=1,\ldots,n} $,
				\begin{align}\label{multidim, l=2}
					&\psi_{(Y^{(2)}_{i}(\bx))_{i \ge 1} | \{ Y^{(1)}_{j}(\bx) \}_{j} }(\bft) \\
					\nn&\qquad := \bbE\left[\left. \exp\left( i\sum_{i\in\cL} t_{i}Y^{(2)}_{i}(\bx) \right) \right| \{ Y^{(1)}_{j}(\bx) \}_{j} \right] \\
					\nn&\qquad = \bbE\left[ \exp\left( i\sum_{i\in\cL} B^{(2)}_{i}t_{i} \right) \right] \bbE\left[\left. \exp\left( i\frac{1}{a_{n}}\sum_{j=1}^{n} \sum_{i\in\cL} W^{(2)}_{ij}\phi(Y^{(1)}_{j}(\bx)) t_{i} \right)  \right| \{ Y^{(1)}_{j}(\bx) \}_{j}\right] \\
					\nn&\qquad = \psi_{B}(\bft) \prod_{j=1}^{n} \prod_{i\in\cL}\bbE\left[ \left. \exp\left( i\frac{1}{a_{n}} W^{(2)}_{ij}\phi(Y^{(1)}_{j}(\bx)) t_{i} \right)  \right| \{ Y^{(1)}_{j}(\bx) \}_{j}\right] \\
					\nn&\qquad = \psi_{B}(\bft) \prod_{j=1}^{n} \prod_{i\in\cL}\psi_{W}\left(\frac{\phi(Y^{(1)}_{j}(\bx))t_{i}}{a_{n}}\right) 
				\end{align}
				Taking the expectation over the randomness of $ \{ Y^{(1)}_{j}(\bx) \}_{j=1,\ldots,n} $,
				\begin{align*}
                                        \frac{\psi_{(Y^{(2)}_{i}(\bx))_{i \ge 1}}(\bft)}{\psi_{B}(\bft)}
					&= \int \prod_{j=1}^{n} \prod_{i\in\cL}\psi_{W}\left(\frac{\phi(y_{j})t_{i}}{a_{n}}\right) \, \bigotimes_{j= 1}^n \nu^{(1)}(dy_{j}) \\
					&= \left( \int \prod_{i\in\cL}\psi_{W}\left(\frac{\phi(y)t_{i}}{a_{n}}\right) \, \nu^{(1)}(dy) \right)^{n} .
				\end{align*}
                                Now since 
				\begin{align*}
					&\prod_{i\in\cL}\psi_{W}\left(\frac{\phi(y)t_{i}}{a_{n}}\right) \\
					&\qquad = 1 - c_\alpha\frac{b_n }{n} \sum_{i\in\cL} |\phi(y)t_{i}|^{\alpha} \frac{L_{0}\left( \frac{a_{n}}{|\phi(y)t_{i}|} \right)}{L_{0}(a_{n})} + o\left( \frac{b_n }{n} \sum_{i\in\cL} |\phi(y)t_{i}|^{\alpha} \frac{L_{0}\left( \frac{a_{n}}{|\phi(y)t_{i}|} \right)}{L_{0}(a_{n})} \right) ,
				\end{align*}
				it follows that
				\begin{align*}
                                        \frac{\psi_{(Y^{(2)}_{i}(\bx))_{i \ge 1}}(\bft)}{\psi_{B}(\bft)}
					&= \left( 1 - c_\alpha\frac{b_n }{n} \sum_{i\in\cL} \int |\phi(y)t_{i}|^{\alpha} \frac{L_{0}\left( \frac{a_{n}}{|\phi(y)t_{i}|} \right)}{L_{0}(a_{n})} \, \nu^{(1)}(dy) + o\left( \frac{b_n }{n} \right) \right)^{n} \\
					&\to \exp\left( -c_\alpha \sum_{i\in\cL} |t_{i}|^{\alpha} \int |\phi(y)|^{\alpha} \, \nu^{(1)}(dy) \right) \\
					&= \prod_{i\in\cL} \exp\left( -c_\alpha |t_{i}|^{\alpha} \int |\phi(y)|^{\alpha} \, \nu^{(1)}(dy) \right) .
				\end{align*}
				This proves the case $\ell=2$.
				
	                        \noindent{\bf Case $\ell>2$:}
				
					The remainder of the proof uses induction on the layer $\ell$, the base case being $\ell=2$ proved above.
				Let $ \ell > 2 $. Also, let $n=n_{\ell-1}$, $\alpha=\alpha_\ell$, $a_n=a_{n_{\ell-1}}(\ell)$, $\sigma_B=\sigma_{B^{(\ell)}}$,
				and $t\neq 0$.
				Then $ \{ Y^{(\ell-1)}_{j}(\bx) \}_{j=1,\ldots,n} $ is no longer i.i.d.; however, it is still exchangeable.
				By de Finetti's theorem (see Remark \ref{rem:1}), there exists a random probability measure 
				\begin{align}\label{def:xi}
                                        \xi^{(\ell-1)}(dy):=\xi^{(\ell-1)}(dy,\omega;
                                        \bn)
				\end{align}
				 such that given $\xi^{(\ell-1)}$, the random variables $Y^{(\ell-1)}_{j}(\bx), j=1,2,\ldots $ are i.i.d. with distribution $ \xi^{(\ell-1)}(dy,\omega)$ where $\omega\in\Omega$ is an element of the probability space.

				As before, we start by proving convergence of the marginal distribution.
				Taking the conditional expectation of \eqref{eq: conditional ch f}, given $\xi^{(\ell-1)}$, we have
				\begin{align*}
					\psi_{Y^{(\ell)}_{i}(\bx)|\xi^{(\ell-1)}}(t)
                                        &:= \bbE\left[\left. \psi_{Y^{(\ell)}_{i}(\bx) | \{ Y^{(\ell-1)}_{j}(\bx) \}_{j}}(t) \right| \xi^{(\ell-1)}\right] 
                                        \\
					&= e^{-\sigma_{B}^{\alpha} |t|^{\alpha}} \bbE\left[\left. \prod_{j=1}^{n} \psi_{W^{(\ell)}_{ij}}\left (\frac{\phi(Y^{(\ell-1)}_{j}(\bx))}{a_{n}}t\right ) \right|\xi^{(\ell-1)} \right] \\
					&= e^{-\sigma_{B}^{\alpha} |t|^{\alpha}} \left( \int \psi_W\left (\frac{\phi(y)}{a_{n}}t\right ) \, \xi^{(\ell-1)}(dy) \right)^{n}
				\end{align*}
                                where $\psi_W:=\psi_{W^{(\ell)}_{ij}}$ for some/any $i,j$.				
				Using Lemma \ref{lem: pitman approximation} and Lemma \ref{lem: pitman slowly varying bound} again, we get
				\begin{align}
                    &\left(\int \psi_W\left (\frac{\phi(y)t}{a_{n}}\right ) \, \xi^{(\ell-1)}(dy)\right)^{n} = {} \label{eq: ch f expansion integration general l} \\
                    &\quad \left(1 - c_\alpha\frac{b_n }{n} \int |\phi(y)t|^{\alpha}\frac{L_{0}\left( \frac{a_{n}}{|\phi(y)t|} \right)}{L_{0}(a_{n})} \, \xi^{(\ell-1)}(dy) + o\left ( \frac{b_n }{n} \int |\phi(y)t|^{\alpha}\frac{L_{0}\left( \frac{a_{n}}{|\phi(y)t|} \right)}{L_{0}(a_{n})} \, \xi^{(\ell-1)}(dy) \right )\right)^{n} . \nonumber
				\end{align}
                            Note that these are random integrals since $ \xi^{(\ell-1)}(dy) $ is random, whereas the corresponding integral in the case $ \ell=2 $ was deterministic. Also,  each integral on the right-hand side is finite almost surely since $ \phi $ is bounded.			
				By the induction hypothesis, the joint distribution of $ (Y^{(\ell-1)}_{i}(\bx))_{i \ge 1}$ converges weakly to the product measure $\bigotimes_{i \ge 1}  \mu_{\alpha_{\ell-1},\sigma_{\ell-1}}$.
				We claim that
				\begin{align}\label{eq: convergence claim}
					\int |\phi(y)t|^{\alpha}\frac{L_{0}\left( \frac{a_{n}}{|\phi(y)t|} \right)}{L_{0}(a_{n})} \, \xi^{(\ell-1)}(dy)
                                        \stackrel{p}{\to} |t|^{\alpha}\int |\phi(y)|^{\alpha} \, \mu_{\alpha_{\ell-1},\sigma_{\ell-1}}(dy) .
				\end{align}
				To see this, note that
				\begin{align}\label{eq: convergence claim bound}
                                        &\left| \int |\phi(y)t|^{\alpha}\frac{L_{0}\left( \frac{a_{n}}{|\phi(y)t|} \right)}{L_{0}(a_{n})} \, \xi^{(\ell-1)}(dy) - \int |\phi(y)t|^{\alpha} \, \mu_{\alpha_{\ell-1},\sigma_{\ell-1}}(dy) \right| \\ 
                                        \nonumber
                                        &\qquad \le \left| \int  |\phi(y)t|^{\alpha}  \frac{L_{0}\left( \frac{a_{n}}{|\phi(y)t|} \right)}{L_{0}(a_{n})} \, \xi^{(\ell-1)}(dy)-\int  |\phi(y)t|^{\alpha}  \frac{L_{0}\left( \frac{a_{n}}{|\phi(y)t|} \right)}{L_{0}(a_{n})} \, \mu_{\alpha_{\ell-1}, \sigma_{\ell-1}}(dy) \right| \\
                                        \nonumber
                                        & \qquad + \left| \int |\phi(y)t|^{\alpha} \frac{L_{0}\left( \frac{a_{n}}{|\phi(y)t|} \right)}{L_{0}(a_{n})} \, \mu_{\alpha_{\ell-1},\sigma_{\ell-1}}(dy) - \int |\phi(y)t|^{\alpha} \, \mu_{\alpha_{\ell-1},\sigma_{\ell-1}}(dy) \right| .
				\end{align}

				{First, consider the first term on the right-hand side of the above. By Corollary \ref{lem: convergence of random mixing measures}, the random measures $\xi^{(\ell-1)}$  {converge weakly, in probability,} to $\mu_{\alpha_{\ell-1}, \sigma_{\ell-1}}$ as $\bn\to\infty$ in the sense of \eqref{n to infinity}, where $\bn\in\N^{\ll}$. Also, by Lemma \ref{lem: pitman slowly varying bound}, we have
				\begin{equation}\label{eq: removing L inequality}
				    |\phi(y) t|^\alpha \frac{L_{0}\left( \frac{a_{n}}{|\phi(y)t|}\right)}{L_{0}(a_{n})} \leq b |\phi(y) t|^{\alpha \pm \eps}
                \end{equation}
                    for large $n$. For any subsequence $(\bn_j)_{j}$, there is a further subsequence $(\bn_{j_k})_k$ along which, $\omega$-a.s., $\xi^{(\ell-1)} $ converges weakly to $ \mu_{\alpha_{\ell-1},\sigma_{\ell-1}}$. To prove that the first term on the right-hand side of (\ref{eq: convergence claim bound}) converges in probability to $ 0 $, it is enough to show that it converges almost surely to $ 0 $ along each subsequence $(\bn_{j_k})_k$. Fix an $\omega$-realization of the random distributions $(\xi^{(\ell-1)}(dy,\omega;\bn))_{\bn \in \N^{\ll}}$ such that convergence along the subsequence $(\bn_{j_k})_k$ holds. Keeping $\omega$ fixed, view $g(y_{\bn})=|\phi(y_{\bn}) t|^{\alpha \pm \epsilon}$ as a random variable where the parameter $y_{\bn}$ is sampled from the distribution $\xi^{(\ell-1)}(dy,\omega;\bn)$. Since $ \phi $ is bounded, the family of these random variables is uniformly integrable. Since $ \xi^{(\ell-1)}(dy,\omega;\bn) $ converges weakly to $ \mu_{\alpha_{\ell-1},\sigma_{\ell-1}} $ along the subsequence, the Skorokhod representation and Vitali convergence theorem \cite[p. 94]{royden2010real} guarantee the convergence of the first term on the right-hand side of \eqref{eq: convergence claim bound} to $ 0 $ as $\bn$ tends to~$\infty$.}

				Now, for the second term, since $$\underset{n \to \infty}{\lim} |\phi(y)t|^{\alpha} \frac{L_{0}\left( \frac{a_{n}}{|\phi(y)t|} \right)}{L_{0}(a_{n})} = |\phi(y) t|^\alpha$$ for each $y$ and $ \phi $ is bounded, we can use dominated convergence via \eqref{eq: removing L inequality} to show that the second term on the right-hand side of \eqref{eq: convergence claim bound} also converges to zero, proving the claim.

				Having proved (\ref{eq: convergence claim}),
                         we have
				\begin{align*}
					&\left( 1 + \frac{1}{n} \left( -c_\alpha b_n  \int |\phi(y)t|^{\alpha} \frac{L_{0}\left( \frac{a_{n}}{|\phi(y)t|} \right)}{L_{0}(a_{n})} \, \xi^{(\ell-1)}(dy) + o( b_n  ) \right) \right)^{n} \\
                                        &\qquad  \stackrel{p}{\to} \exp\left( -c_\alpha |t|^{\alpha} \int |\phi(y)|^{\alpha} \, \mu_{\alpha_{\ell-1},\sigma_{\ell-1}}(dy) \right)
				\end{align*}
				and hence
				\begin{align*}
                                        \psi_{Y^{(\ell)}_i(\bx)| \xi^{(\ell-1)}}(t)
                                         \stackrel{p}{\to} e^{-\sigma_{B}^{\alpha}|t|^{\alpha}} \exp\left( -c_\alpha |t|^{\alpha} \int |\phi(y)|^{\alpha} \, \mu_{\alpha_{\ell-1},\sigma_{\ell-1}}(dy) \right) .
				\end{align*}
				Thus, the limiting distribution of $Y^{(\ell)}_{i}(\bx)$, given $\xi^{(\ell-1)}$,  is $\mu_{\alpha, \sigma_{\ell}}$ with
				\begin{align*}
                                        \sigma^{\alpha}_{\ell} &= \sigma_{B}^{\alpha} + c_\alpha \int |\phi(y)|^{\alpha} \, \mu_{\alpha,\sigma_{\ell-1}}(dy).
				\end{align*}
				Recall that characteristic functions are bounded by 1. Thus, by
				taking the expectation of both sides and using dominated convergence, we can conclude that the (unconditional) characteristic function converges to the same expression and thus the (unconditional)
						 distribution of $Y^{(\ell)}_{i}(\bx)$ converges weakly to $\mu_{\alpha, \sigma_{\ell}}$.

				Finally, we prove that the joint distribution converges weakly to the product $\bigotimes_{i\ge 1}\mu_{\alpha,\sigma_{\ell}} $. Let $ \cL \subset \bbN $ be a finite set and $ \bft = (t_{i})_{i\in\cL} $.
                                Conditionally on $ \{ Y^{(\ell-1)}_{j}(\bx) \}_{j=1,\ldots,n} $,
				\begin{align}\label{multidim general l}
                                        \psi_{(Y^{(\ell)}_{i}(\bx))_{i\ge 1} | \{ Y^{(\ell-1)}_{j}(\bx) \}_{j=1,\ldots,n} }(\bft)
                    &= \psi_{B}(\bft) \prod_{j=1}^{n} \prod_{i\in\cL}\psi_{W}\left(\frac{\phi(Y^{(\ell-1)}_{j}(\bx))t_{i}}{a_{n}}\right) .
				\end{align}
				Taking the expectation with respect to $ \{ Y^{(\ell-1)}_{j}(\bx) \}_{j=1,\ldots,n} $,
				\begin{align*}
					\frac{\psi_{(Y^{(\ell)}_{i}(\bx))_{i\ge 1}}(\bft)}{\psi_{B}(\bft)}
                    &= \bbE \int \prod_{j=1}^{n} \prod_{i\in\cL}\psi_{W}\left(\frac{\phi(y_{j})t_{i}}{a_{n}}\right) \, \bigotimes_{j\ge 1} \xi^{(\ell-1)}(dy_{j}) \\
                    &= \bbE \left( \int \prod_{i\in\cL}\psi_{W}\left(\frac{\phi(y)t_{i}}{a_{n}}\right) \, \xi^{(\ell-1)}(dy) \right)^{n} .
				\end{align*}
				Now since 
				\begin{align*}
                    \prod_{i\in\cL}\psi_{W}\left(\frac{\phi(y)t_{i}}{a_{n}}\right)
					\sim 1 - c_\alpha \frac{b_n }{n} \sum_{i\in\cL} |\phi(y)t_{i}|^{\alpha} \frac{L_{0}\left( \frac{a_{n}}{|\phi(y)t_{i}|} \right)}{L_{0}(a_{n})},
				\end{align*}
				a similar argument to that of convergence of the marginal distribution shows that
				\begin{align*}
					\frac{\psi(\bft)}{\psi_{B}(\bft)}
					&\sim \bbE \left( 1 - c_\alpha \frac{b_n }{n} \sum_{i\in\cL} \int |\phi(y)t_{i}|^{\alpha} \frac{L_{0}\left( \frac{a_{n}}{|\phi(y)t_{i}|} \right)}{L_{0}(a_{n})} \, \xi^{(\ell-1)}(dy) \right)^{n} \\
                                        &\to \exp\left( -c_\alpha  \sum_{i\in\cL} |t_{i}|^{\alpha} \int |\phi(y)|^{\alpha} \, \mu_{\alpha_{\ell-1},\sigma_{\ell-1}}(dy) \right) \\
                                        &= \prod_{i\in\cL} \exp\left( -c_\alpha  |t_{i}|^{\alpha} \int |\phi(y)|^{\alpha} \, \mu_{\alpha_{\ell-1},\sigma_{\ell-1}}(dy) \right)
				\end{align*}
				completing the proof.
			\end{proof} 

\section{Relaxing the Boundedness Assumption}\label{sec: Relaxing the Boundedness Assumption}

	As we mentioned earlier in Remark~\ref{rmk: boundedness of phi}, the boundedness assumption on $\phi$ can be relaxed,
        as long as it is done with care. To show the subtlety of our relaxation, we first present a counterexample where, for heavy-tailed initializations, we cannot use a function which grows linearly.
	\begin{remark}\label{rmk: relu counterexample}
		Consider the case where $ \phi = \operatorname{ReLU} $, $ \bbP( |W^{(\ell)}_{ij}| > t) = t^{-\alpha} $ for $ t \ge 1 $, $ 0<\alpha<2 $, and $ \sigma_{B} = 0 $.
		For an input $ \bx = (1,0,\ldots,0) \in \bbR^{I} $, we have
		\begin{align*}
			&Y^{(1)}_{i}(\bx) = W^{(1)}_{i1}, \\
			&Y^{(2)}_{i}(\bx) = \frac{1}{a_n} \sum_{j=1}^{n} W^{(2)}_{ij} W^{(1)}_{j1} \ind_{\{W^{(1)}_{j1}>0\}},\\
			& a_n = n^{1/\alpha}.
		\end{align*}
		Let us calculate the distribution function of $ W^{(2)}_{ij} W^{(1)}_{j1} \ind_{\{W^{(1)}_{j1}>0\}} $:
		For $ z \ge 1 $,
		\begin{align*}
			&\bbP(W^{(2)}_{ij} W^{(1)}_{j1} \ind_{\{W^{(1)}_{j1}>0\}} \le z) \\
			&\qquad = \bbP(W^{(1)}_{j1} \leq 0) + \int_{1}^{z} \frac{1}{2} \alpha w_{1}^{-\alpha-1} \bbP\left( W^{(2)}_{ij} \le \frac{z}{w_{1}} \right) \, dw_{1} + \int_{z}^{\infty} \frac{1}{2} \alpha w_{1}^{-\alpha-1} \bbP\left( W^{(2)}_{ij} \le -1 \right) \, dw_{1} \\
			&\qquad = 1 - \frac{1}{4} z^{-\alpha} - \frac{1}{4} \alpha z^{-\alpha} \log z .
		\end{align*}
		Similarly, for $ z \le -1 $,
		\begin{align*}
			\bbP(W^{(2)}_{ij} W^{(1)}_{j1} \ind_{\{W^{(1)}_{j1}>0\}} < z) = \frac{1}{4} \alpha (-z)^{-\alpha} \log(-z) + \frac{1}{4} (-z)^{-\alpha} .
		\end{align*}
		Thus,
		\begin{align*}
			\bbP(|W^{(2)}_{ij} W^{(1)}_{j1} \ind_{\{W^{(1)}_{j1}>0\}}| > z) = \frac{1}{2} z^{-\alpha} \left( 1 + \alpha \log z \right) .
		\end{align*}
		Let $ {\hat{a}}_n := \inf\{x\colon x^{-\alpha} (1+\alpha\log x)/2 \le n^{-1} \} $.
		Then, $ n \hat{a}_n^{-\alpha} (1+\alpha\log \hat{a}_n)/2 \to 1 $ as $ n \to \infty $, which leads to
		\begin{align*}
			\frac{\hat{a}_n}{n^{1/\alpha}} \sim ((1+\alpha\log \hat{a}_n)/2)^{1/\alpha} \to \infty
		\end{align*}
		when $ n $ is large.
		Thus, $ \hat{a}_n $ is of strictly larger order than $ n^{1/\alpha} $, so $ Y^{(2)}_{i}(\bx) $ cannot converge.
	\end{remark}

{Despite the above remark, there is still room to relax the boundedness assumption on $\phi$. Note that, in the proof of Theorem \ref{prop:main}, we used boundedness (in a critical way) to prove the claim \eqref{eq: convergence claim}. In particular, boundedness gave us that the family of random variables $ |\phi(y)|^{\alpha+\epsilon} $ with respect to the random distribution $ \xi^{(\ell-1)}(dy,\omega;\bn) $ is $ y $-uniformly integrable $ \omega $-almost surely. We directly make this into an assumption on $ \phi $ as follows: Let $n:=n_{\ell-2}$, and $a_n:=a_{n_{\ell-2}}(\ell-1)$.} Suppose
\begin{itemize}
	\item[(UI1)] for $\ell=2$, there exists $ \epsilon_{0}>0$ such that $ |\phi(Y^{(1)}_{j})|^{\alpha_{2}+\epsilon_{0}} $ is integrable;
	\item[(UI2)]  for $ \ell=3,\ldots,\ll+1 $, there exists $ \epsilon_{0}>0$ such that for any array $(c_{\bn,j})_{\bn,j}$ satisfying
	{\begin{align}\label{eq: c_{n,j} summable assumption}
		\sup_{\bn} \frac{1}{n}\sum_{j=1}^{n} |c_{\bn,j}|^{\alpha_{\ell-1}+\epsilon_{0}} < \infty,
	\end{align}}	
	we have uniform integrability of the family
	\begin{align}\label{eq: phi ui assumption}
	&\left\{ \left| \phi\left( \frac{1}{a_{n}} \sum_{j=1}^{n} c_{\bn,j} W^{(\ell-1)}_{j} \right)\right|^{\alpha_{\ell} + \epsilon_{0}}\right\}_{\bn}
	\end{align}
	over $\bn$.
\end{itemize}
If $ \phi $ is bounded, then the above is obviously satisfied.
It is not clear whether there is a simpler description of the family of functions that satisfy this assumption (see \cite{aldous1986classical}); however, let us argue now that this is general enough to recover the previous results of Gaussian weights or stable weights.

In \cite{matthews2018nngp} (as well as many other references), the authors considered Gaussian initializations with an activation function $ \phi $ satisfying the so-called polynomial envelop condition. That is, $ |\phi(y)| \le a + b|y|^{m} $ for some $ a,b > 0 $ and $ m \ge 1 $ and $ W \sim \cN(0,\sigma^{2}) $.
In this setting, we have $ a_{n} \sim \sigma\sqrt{n/2} $ and $ \alpha = 2 $ for all $\ell$, and $ c_{\bn,j} = c_{\bn,j}^{(\ell-2)}= \phi(Y^{(\ell-2)}_{j}(\bx;\bn)) $.
Conditioning on $(Y^{(\ell-2)}_j)_j$ and assuming that \eqref{eq: c_{n,j} summable assumption} holds a.s., let us show that $ \phi $ satisfying the polynomial envelope condition also satisfies our uniform integrability assumptions (UI1) and (UI2) a.s.
	For $ \ell=2 $, the distribution of
	\begin{align*}
		Y^{(1)}_{i} = \sum_{j=1}^{I} W^{(1)}_{ij} x_{j} + B^{(1)}_{i}
	\end{align*}
	is Gaussian, and thus $ |\phi(Y^{(1)}_{j})|^{2+\epsilon_{0}} \le C_0 + C_1 |Y^{(1)}_{j}|^{m(2+\epsilon_{0})}$ is integrable.
	For $ \ell \ge 3 $, note that 
	\begin{align*}
		S^{(\ell-1)}_{\bn} := \frac{1}{a_{n}} \sum_{j=1}^{n} c_{\bn,j} W^{(\ell-1)}_{j} \sim \cN\left(0, \frac{2}{n}\sum_{j=1}^{n} c_{\bn,j}^{2}\right) ,
	\end{align*}
	where the variance is uniformly bounded over $ \bn $ if we assume \eqref{eq: c_{n,j} summable assumption}.
For $ \theta > 1 $, the $ \nu:= m(2+\epsilon_{0})\theta $-th moment of $ S_{n} $ can be directly calculated, which is known to be
\begin{align*}
	 2^{\nu/2} \frac{1}{\sqrt{\pi}} \Gamma\(\frac{1 + \nu}{2}\)\( \frac{2}{n} \sum_{j=1}^{n} c_{\bn,j}^{2} \)^{\nu/2} .
\end{align*}
This is uniformly bounded over $ \bn $, and hence $ |\phi(S_{n})|^{2+\epsilon_{0}} $ is uniformly integrable over $ \bn $.
This shows that $ \phi $ satisfying the polynomial envelope condition meets (UI1) and (UI2) assuming~\eqref{eq: c_{n,j} summable assumption}.

In \cite{favaro2020stable}, the authors considered the case where $ W^{(\ell)} $ is a symmetric $ \alpha$-stable random variable with scale parameter $ \sigma_{\ell} $, i.e., with characteristic function $ e^{-\sigma_{\ell}^{\alpha}|t|^{\alpha}} $. They used the envelop condition $ |\phi(y)| \le a + b|y|^{\beta} $ where $ \beta < 1 $.
For the more general case where we have different $ \alpha_{\ell} $-stable weights for different layers $ \ell $, this envelop condition can be generalized to $ \beta < \min_{\ell\ge2} \alpha_{\ell-1}/\alpha_{\ell} $.
In this case, 
$ a_{n}^{\alpha_{\ell}} \sim 
(\sigma_{\ell}^{\alpha_{\ell}}n)/c_{\alpha_{\ell}}$ and $ c_{\bn,j} = c_{\bn,j}^{(\ell-2)} = \phi(Y^{(\ell-2)}_{j}(\bx;\bn)) $.
Again, conditioning on $(Y^{(\ell-2)}_{j})_j$  and assuming \eqref{eq: c_{n,j} summable assumption}, let us show that $ \phi $ under this generalized envelope condition satisfies the uniform integrability assumptions (UI1) and (UI2) above. 
For $ \ell=2 $, the distribution of 
\begin{align*}
	Y^{(1)}_{j} = \sum_{j=1}^{I} W^{(1)}_{ij} x_{j} + B^{(1)}_{i}
\end{align*}
is $ \alpha_{1} $-stable. By the condition on $\beta$, there are $ \delta $ and $ \epsilon_{0} $ satisfying $ \beta(\alpha_{2}+\epsilon_{0}) \le \alpha_{1} - \delta $ so that $$ |\phi(Y^{(1)}_{j})|^{\alpha_{2}+\epsilon} \le C_0 + C_1 |Y^{(1)}_{j}|^{\alpha_{1} - \delta} ,$$ which is integrable.
For $ \ell \ge 3 $, the distribution of $ S^{(\ell-1)}_{\bn} := a_{n}^{-1}\sum_{j} c_{\bn,j} W^{(\ell-1)}_{j} $ becomes a symmetric $ \alpha_{\ell-1} $-stable distribution with scale parameter
\begin{align*}
	\( \frac{c_{\alpha_{\ell-1}}}{n} \sum_{j=1}^{n} |c_{\bn,j}|^{\alpha_{\ell-1}} \)^{1/\alpha_{\ell-1}}
\end{align*}
which is uniformly bounded over $ \bn $ assuming \eqref{eq: c_{n,j} summable assumption}.
Since $ \beta < \min_{\ell\ge2} \alpha_{\ell-1}/\alpha_{\ell} $, it follows that, for some $ \theta>1 $, there exist small $ \epsilon_{0} > 0 $ 
and $\delta > 0$ such that
\begin{align*}
	\left| \phi\( S^{(\ell-1)}_{\bn} \) \right|^{(\alpha_{\ell}+\epsilon_{0})\theta}
	\le C_0 + C_1 \left| S^{(\ell-1)}_{\bn} \right|^{\beta(\alpha_{\ell} + \epsilon_{0})\theta}
        \le C_0 + C_1 \left| S^{(\ell-1)}_{\bn} \right|^{\alpha_{\ell-1} - \delta} .
\end{align*}
It is known that (for instance \cite{shanbhag1977certain}) the expectation of $ |S^{(\ell-1)}_{\bn}|^{\nu} $ with $ \nu < \alpha_{\ell-1} $ is
\begin{align*}
	K_{\nu} \( \frac{c_{\alpha_{\ell-1}}}{n} \sum_{j=1}^{n} |c_{\bn,j}|^{\alpha_{\ell-1}} \)^{\nu/\alpha_{\ell-1}}
\end{align*}
where $ K_{\nu} $ is a constant that depends only on $ \nu $ (and $ \alpha_{\ell-1} $). As this is bounded uniformly over $ \bn $, the family
\begin{align*}
	\left \{ \left| \phi\( S^{(\ell-1)}_{\bn} \) \right|^{\alpha_{\ell}+\epsilon_{0}} \right \}_{\bn}
\end{align*}
is uniformly integrable.
Thus our $ \phi $, under the generalized envelope condition, satisfies (UI1) and (UI2).

Let us now see that $ c_{\bn,j} $ satisfies condition \eqref{eq: c_{n,j} summable assumption} in both the Gaussian and symmetric stable case.
For $ \ell=3 $, $ c_{\bn,j} = \phi(Y^{(1)}_{j}) $ satisfies \eqref{eq: c_{n,j} summable assumption} by the strong law of large numbers since $ |\phi(Y^{(1)}_{j})|^{\alpha_{2}+\epsilon_{0}} $ is integrable.
For $ \ell > 3 $, an inductive argument shows that the family $ \{ |\phi(Y^{(\ell-2)}_{j})|^{\alpha_{\ell-1} + \epsilon_{0}} \}_{\bn} $ is uniformly integrable which leads to \eqref{eq: c_{n,j} summable assumption}. The details of this inductive argument are contained in the following proof.

\begin{proof}[Proof of Theorem~\ref{prop:main} under (UI1) and (UI2)]
	We return to the claim in (\ref{eq: convergence claim}) to see how conditions (UI1) and (UI2) are sufficient, even when $\phi$ is unbounded. We continue to let $n:=n_{\ell-2}$. Choose a sequence $\{(n,\bn)\}_n$, where $\bn=\bn(n)$ depends on $n$ and  $\bn\to\infty$ as $n\to\infty$ in the sense of \eqref{n to infinity}. Note that (i) to evaluate the limit as $\bn \to \infty$, it suffices to show the limit exists consistently for any choice of a sequence $\{\bn(n)\}_n$ that goes to infinity, and (ii) we can always pass to a subsequence (not depending on $\omega$) since we are concerned with convergence in probability. 
		Therefore, below we will show a.s. uniform integrability over some infinite subset of an arbitrary index set of the form $\{(n,\bn(n)): n \in \N\}$. 
	
	Let $a_n:=a_{n_{\ell-2}}(\ell-1)$.
	Proceeding as in (\ref{eq: convergence claim bound}) and (\ref{eq: removing L inequality}), we need to show that the family $ |\phi(y_{\bn})|^{\alpha+\epsilon} $ is uniformly integrable.
	Since $\{ a_{n}^{-1}\sum_{j} \phi(Y^{(\ell-2)}_{j}) W^{(\ell-1)}_{ij} \}_i$ is conditionally i.i.d. given $ \{Y^{(\ell-2)}_{j}\}_{j} $, the random distribution
	$ \xi^{(\ell-1)}(dy,\omega;\bn) $ is the law of $ a_{n}^{-1}\sum_{j} \phi(Y^{(\ell-2)}_{j}) W^{(\ell-1)}_{ij} $ given $ \{Y^{(\ell-2)}_{j}\}_{j} $ by the uniqueness of the directing random measure (\cite[Proposition 1.4]{kallenberg2006probabilistic}). Thus, by (UI2), it suffices to check that $ n^{-1}\sum_{j} |\phi(Y^{(\ell-2)}_{j})|^{\alpha_{\ell-1}+\epsilon_{0}} $ is uniformly bounded for $ \ell=3,\ldots,\ll+1 $.
	For $ \ell=3 $, since $ |\phi(Y^{(1)}_{j})|^{\alpha_{2}+\epsilon_{0}} $ is integrable by (UI1), 
	\[ \lim_{n\to \infty} \frac{1}{n} \sum_{j=1}^{n} |\phi(Y^{(1)}_{j})|^{\alpha_{2}+\epsilon_{0}} < \infty 
	\]
	by the strong law of large numbers and hence the normalized sums are almost surely bounded.
	For $ \ell > 3 $, we proceed inductively.
	By the inductive hypothesis, we have
	$$ \underset{\bn}{\sup}\frac{1}{n_{\ell-3}} \sum_{j=1}^{n_{\ell-3}} |\phi(Y^{(\ell-3)}_{j})|^{(\alpha_{\ell-2}+\epsilon_{0})(1+\eps')} < \infty$$
	by adjusting $\epsilon_0, \epsilon'>0$  appropriately. By (UI2), we have that the family $$\{|\phi(y_{\bn})|^{(\alpha_{\ell-1}+\eps_0)(1+\eps'')} : y_{\bn} \sim \xi^{(\ell-2)}(dy;\bn)\}$$
	is a.s. uniformly integrable for some $\eps''>0$. Since the $Y_j^{(\ell-2)}$'s are conditionally i.i.d. with common distribution $\xi^{(\ell-2)}(dy;\bn)$ given $\xi^{(\ell-2)}(dy,\omega;\bn)$, by Lemma \ref{lem: ui WLLN} we have that
	\begin{align*}
	& \P \Big( \Big| \frac{1}{n} \sum_{j=1}^{n} |\phi(Y^{(\ell-2)}_{j})|^{\alpha_{\ell-1}+\epsilon_{0}} - \int |\phi(y)|^{\alpha_{\ell-1}+\epsilon_0} \xi^{(\ell-2)} (dy;\bn) \Big| \geq \delta \ \Big| \ \xi^{(\ell-2)}\Big) \to 0
	\end{align*}
	almost surely. By the dominated convergence theorem we can take expectations on both sides to conclude that
	$$  \Big| \frac{1}{n} \sum_{j=1}^{n} |\phi(Y^{(\ell-2)}_{j})|^{\alpha_{\ell-1}+\epsilon_{0}} - \int |\phi(y)|^{\alpha_{\ell-1}+\epsilon_0} \xi^{(\ell-2)} (dy;\bn) \Big| \to 0$$
	in probability, so by passing to a subsequence the convergence holds for almost every $\omega$. Since $$ \underset{\bn}{\sup}\int |\phi(y)|^{\alpha_{\ell-1}+\epsilon_0} \xi^{(\ell-2)} (dy;\bn) < \infty$$
	almost surely, we have also that
	$$ \underset{\bn}{\sup} \frac{1}{n} \sum_{j=1}^{n} |\phi(Y^{(\ell-2)}_{j})|^{\alpha_{\ell-1}+\epsilon_{0}} < \infty$$
	almost surely, proving our claim.
\end{proof}

\section{Joint Convergence with Different Inputs}
\label{section:joint}

	In this section, we extend Theorem \ref{prop:main} to the joint distribution of $k$ different inputs. In this section, we show that the $ k $-dimensional vector $ (Y^{(\ell)}_{i}(\bx_{1};\bn), \ldots, Y^{(\ell)}_{i}(\bx_{k};\bn)) $ converges, and represent the limiting characteristic function via the spectral measure $ \Gamma_{\ell} $.
	
	For simplicity, we use the following notation:
	\begin{itemize}
		\item $\vec{\bx}= (\bx_{1}, \ldots, \bx_{k}) $ where $ \bx_{j} \in \bbR^{I} $.
		\item $ \ind = (1,\ldots,1) \in \bbR^{k} $.
		\item $ \bY^{(\ell)}_{i}(\vec{\bx};\bn) = (Y^{(\ell)}_{i}(\bx_{1};\bn), \ldots, Y^{(\ell)}_{i}(\bx_{k};\bn)) \in \bbR^{k} $, for $i\in\N$.
		\item $ \phi(\bY^{(\ell)}_{i}(\vec{\bx};\bn)) = (\phi(Y^{(\ell)}_{i}(\bx_{1};\bn)), \ldots, \phi(Y^{(\ell)}_{i}(\bx_{k};\bn))) \in \bbR^{k} $.
		\item $ \langle \cdot,\cdot \rangle $ denotes the standard inner product in $ \bbR^{k} $.
		\item 	For any given $j$, let the law of the $ k $-dimensional vector $ \bY^{(\ell)}_{j}(\vec{\bx} ) $ be denoted by $ \nu^{(\ell)}_{k} $ (which does not depend on $j$). Its projection onto the $ s $-th component $Y^{(\ell)}_{i}(\bx_{s};\bn)$ is denoted by $ \nu^{(\ell)}_{k,s} $ for $1\le s\le k$, and the projection onto the two coordinates, $i$-th and $j$-th, is denoted by $\nu^{(\ell)}_{k,ij}$. The limiting distribution of $\bY^{(\ell)}_{j}(\vec{\bx} )$ is denoted by $\mu_k^{(\ell)}$, and the projections are similarly denoted by $\mu_{k,s}^{(\ell)}$ and $\mu_{k, ij}^{(\ell)}.$ 
		\item A centered $k$-dimensional multivariate Gaussian with covariance matrix $M$ is denoted by $\mathcal{N}_k(M).$
				\item For $\alpha<2$, we denote the $ k $-dimensional symmetric $ \alpha $-stable distribution with spectral measure $ \Gamma $ by $ \text{\sas}_{k}(\Gamma)$.
For those not familiar with the spectral measure of a multivariate stable law, see Appendix \ref{appendix: stable}.
	\end{itemize}
	
	\begin{theorem}
	\label{theorem:multivariate-main}
		For each $ \ell = 2,\ldots,\ll+1 $, the joint distribution of the random variables $ (\bY^{(\ell)}_{i}(\vec{\bx};\bn))_{i\ge 1} $ converges weakly to, 
\begin{itemize}
\item for $\alpha_{\ell}<2$, $\mu_k^{(\ell)}=\bigotimes_{i\ge 1} \text{S}\alpha_{\ell}\text{S}_{k}(\Gamma_{\ell}) $ where $\Gamma_\ell$ is defined by
\begin{align}\label{gamma2}
			\Gamma_{2} = \norm{\sigma_{B^{(2)}}\ind}^{\alpha_{2}} \delta_{\frac{\ind}{\norm{\ind}}} + \int \norm{c_{\alpha_{2}}^{1/\alpha_{2}}(\phi(\by)) }^{\alpha_{2}} \, \delta_{\frac{\phi(\by)}{\norm{\phi(\by)}}} \, \nu^{(1)}_{k}(d\by) ,
		\end{align}
and
\begin{align}\label{gammal}
        \Gamma_{\ell} = \norm{\sigma_{B^{(\ell)}}\ind}^{\alpha_{\ell}} \delta_{\frac{\ind}{\norm{\ind}}} + \int \norm{c_{\alpha_{\ell}}^{1/\alpha_{\ell}}(\phi(\by)) }^{\alpha_{\ell}} \, \delta_{\frac{\phi(\by)}{\norm{\phi(\by)}}} \, \mu_k^{(\ell-1)}(d\by) ,
		\end{align}
\item for $\alpha_{\ell}=2$, $\mu_k^{(\ell)}=\bigotimes_{i\ge 1} \mathcal{N}_k(M_\ell)$, where
\begin{align}
	& (M_2)_{ii}= \E |B_i^{(2)}|^2+\frac{1}{2} \int |\phi(y)|^2 \, \nu_{k,i}^{(1)}(dy), \label{M2}
	\\ & (M_2)_{ij}= \frac{1}{2} \int \phi(y_1)\phi(y_2) \, \nu_{k,ij}^{(1)}(dy_1dy_2)  \nonumber
\end{align}
and
\begin{align}
	& (M_\ell)_{ii}= \E |B_i^{(\ell)}|^2+\frac{1}{2} \int |\phi(y)|^2 \, \mu_{k,i}^{(\ell-1)}(dy), \label{Ml}
	\\ & (M_\ell)_{ij}= \frac{1}{2} \int \phi(y_1)\phi(y_2) \, \mu_{k,ij}^{(\ell-1)}(dy_1dy_2). \nonumber
\end{align}
\end{itemize}

	\end{theorem}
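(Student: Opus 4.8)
The plan is to follow the structure of the proof of Theorem~\ref{prop:main} essentially verbatim, but carrying along a $k$-dimensional argument inside the weight characteristic function. Recall that each weight $W^{(\ell)}_{ij}$ is one-dimensional, and the $k$-dimensional structure enters only through the vector $\phi(\bY^{(\ell-1)}_j(\vec{\bx}))\in\bbR^k$ of activations evaluated at the $k$ inputs. The key computation is that, conditionally on $\{\bY^{(\ell-1)}_j(\vec{\bx})\}_j$, for $\vec{t}=(\vec{t}_i)_{i\in\cL}$ with each $\vec{t}_i\in\bbR^k$,
\begin{align*}
\psi_{(\bY^{(\ell)}_i(\vec{\bx}))_{i\in\cL}\,|\,\{\bY^{(\ell-1)}_j\}_j}(\vec{t})
= \psi_{\bfB}(\vec{t})\prod_{j=1}^{n}\prod_{i\in\cL}\psi_{W}\!\left(\frac{\langle \phi(\bY^{(\ell-1)}_j(\vec{\bx})),\vec{t}_i\rangle}{a_n}\right),
\end{align*}
since $\langle W^{(\ell)}_{ij}\phi(\bY^{(\ell-1)}_j),\vec{t}_i\rangle = W^{(\ell)}_{ij}\langle\phi(\bY^{(\ell-1)}_j),\vec{t}_i\rangle$. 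So the one-dimensional Pitman expansion (Lemma~\ref{lem: pitman approximation}) applies with $|\phi(y)t|$ replaced by $|\langle\phi(\by),\vec{t}_i\rangle|$, and Lemma~\ref{lem: pitman slowly varying bound} gives the bound $|\langle\phi(\by),\vec{t}_i\rangle|^{\alpha\pm\epsilon}$, which is still bounded (in the $\phi$-bounded regime) uniformly in $\by$, hence uniformly integrable against the $\xi^{(\ell-1)}$ family.

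The steps, in order: (1) write the conditional characteristic function as above; (2) apply de Finetti / the inductive hypothesis exactly as in Theorem~\ref{prop:main} to replace the random integral against $\xi^{(\ell-1)}$ by an integral against $\mu^{(\ell-1)}_k$ (using Corollary~\ref{lem: convergence of random mixing measures} for the weak-in-probability convergence of $\xi^{(\ell-1)}$, and dominated convergence to remove the slowly-varying ratio $L_0(a_n/|\langle\phi(\by),\vec{t}_i\rangle|)/L_0(a_n)$); (3) conclude that $\psi_{(\bY^{(\ell)}_i(\vec{\bx}))_{i\in\cL}}(\vec{t})\to \psi_{\bfB}(\vec{t})\prod_{i\in\cL}\exp\!\big(-c_{\alpha_\ell}\int|\langle\phi(\by),\vec{t}_i\rangle|^{\alpha_\ell}\,\mu^{(\ell-1)}_k(d\by)\big)$; (4) recognize the limit as the product over $i\in\cL$ of $k$-dimensional stable (resp. Gaussian) characteristic functions. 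For $\alpha_\ell<2$, the bias contributes $\|\sigma_{B^{(\ell)}}\ind\|^{\alpha_\ell}|\langle \ind/\|\ind\|,\vec t_i\rangle|^{\alpha_\ell}$ — wait, more precisely $\psi_{B^{(\ell)}}$ along the diagonal gives $\exp(-\sigma_{B^{(\ell)}}^{\alpha_\ell}|\sum_s (t_i)_s|^{\alpha_\ell})=\exp(-\|\sigma_{B^{(\ell)}}\ind\|^{\alpha_\ell}|\langle\ind/\|\ind\|,\vec t_i\rangle|^{\alpha_\ell})$ since $\langle\ind,\vec t_i\rangle = \sum_s(t_i)_s$ and $\|\ind\|=\sqrt k$ — so combining with the integral term, the exponent is exactly $-\int_{\bbS^{k-1}}|\langle s,\vec t_i\rangle|^{\alpha_\ell}\,\Gamma_\ell(ds)$ with $\Gamma_\ell$ as in \eqref{gamma2}, \eqref{gammal}; this is the standard form of a multivariate \sas\ characteristic function (Appendix~\ref{appendix: stable}). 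For $\alpha_\ell=2$, one instead expands $|\langle\phi(\by),\vec t_i\rangle|^2 = \sum_{s,s'}(t_i)_s(t_i)_{s'}\phi(y_s)\phi(y_{s'})$, and $c_2=1$ with the extra factor $1/2$ coming from the convention that \sas[2] has variance $2\sigma^2$; integrating term by term against $\mu^{(\ell-1)}_{k}$ produces exactly the covariance matrix $M_\ell$ of \eqref{M2}, \eqref{Ml}, plus the bias covariance $\E|B_i^{(\ell)}|^2$ on the diagonal.

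To make step (2) rigorous one needs the joint convergence of $(\bY^{(\ell-1)}_i(\vec{\bx}))_i$, i.e.\ the full inductive hypothesis, and one must check that $\xi^{(\ell-1)}(\cdot;\bn)$, now a random measure on $\bbR^k$, still converges weakly in probability to $\mu^{(\ell-1)}_k$; this is exactly the content of the multivariate analogue of Lemma~\ref{lem:mixing measure}/Corollary~\ref{lem: convergence of random mixing measures}, whose proof goes through unchanged since it only uses exchangeability and marginal convergence of the two-coordinate projections. The base case $\ell=2$ is handled directly as in Theorem~\ref{prop:main} because $\bY^{(1)}_j(\vec{\bx})$ are genuinely i.i.d.\ (no de Finetti needed), with common law $\nu^{(1)}_k$.

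I expect the main obstacle to be purely bookkeeping rather than conceptual: verifying that the limiting exponent, assembled from the bias term and the integral term, is literally the canonical spectral-measure representation $\int_{\bbS^{k-1}}|\langle s,\vec t\rangle|^{\alpha}\Gamma(ds)$ with $\Gamma$ as claimed — in particular getting the normalizations right (the $c_{\alpha_\ell}^{1/\alpha_\ell}$ absorbed into the radial part $\|c_{\alpha_\ell}^{1/\alpha_\ell}\phi(\by)\|^{\alpha_\ell}=c_{\alpha_\ell}\|\phi(\by)\|^{\alpha_\ell}$ while the Dirac mass sits at the direction $\phi(\by)/\|\phi(\by)\|$, and handling $\phi(\by)=0$ by convention). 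A secondary technical point is the $\alpha_\ell=2$ case: there the slowly-varying function is $\widetilde L^{(\ell)}$ rather than $L^{(\ell)}$, $a_n\sim\sqrt{n\,\widetilde L/ \cdots}$, and the expansion of $\psi_W$ is the classical CLT expansion $1-\tfrac12|t|^2\var(W)+o$, so the constant $c_2=1$ together with the factor $\tfrac12$ must be tracked to land on $M_\ell$ exactly; but all of this is already implicit in the one-dimensional proof and only needs to be run coordinate-pair by coordinate-pair. No new analytic input beyond Lemmas~\ref{lem: pitman approximation}, \ref{lem: pitman slowly varying bound}, \ref{lem:mixing measure} and Corollary~\ref{lem: convergence of random mixing measures} is required.
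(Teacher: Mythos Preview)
Your proposal is correct and follows essentially the same route as the paper's own proof: reduce to the one-dimensional characteristic function $\psi_W$ evaluated at the inner product $\langle \phi(\bY^{(\ell-1)}_j(\vec{\bx})),\vec t_i\rangle/a_n$, apply the Pitman expansion and the Potter-type bound (Lemmas~\ref{lem: pitman approximation} and~\ref{lem: pitman slowly varying bound}), use de Finetti plus Corollary~\ref{lem: convergence of random mixing measures} (which, as you note, is already stated for Borel spaces and hence covers $\bbR^k$) to pass from $\xi^{(\ell-1)}$ to $\mu_k^{(\ell-1)}$, and then identify the spectral measure or covariance. The only difference is that the paper does the marginal case (single $i$) first and then extends to finite $\cL$, whereas you write the joint conditional characteristic function directly; both are fine and your anticipated bookkeeping concerns (the $c_{\alpha_\ell}^{1/\alpha_\ell}$ normalization, the $\alpha_\ell=2$ constants) are exactly what the paper spends its remaining lines on.
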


	\begin{proof}
		Let $ \bft = (t_{1}, \ldots, t_{k}) $.
		We again start with the expression
		\begin{align}
            & \psi_{\bY^{(\ell)}_{i}(\vec{\bx} ) | \{ \bY^{(\ell-1)}_{j}(\vec{\bx} ) \}_{j\ge1}}(\bft) \label{eq: multi d process ch f} \\
            &\qquad {} = \bfE\left[\left. e^{i\langle \bft, \bY^{(\ell)}_{i}(\vec{\bx} ) \rangle} \right| \{ \bY^{(\ell-1)}_{j}(\vec{\bx} ) \}_{j\ge1} \right] \nonumber \\
            &\qquad {} = \bfE\left[\left. \exp\left({i\left\langle \bft, \frac{1}{a_{n}} \sum_{j=1}^{n} W^{(\ell)}_{ij} \phi(\bY^{(\ell-1)}_{j}(\vec{\bx} )) + B^{(\ell)}_{i} \ind \right\rangle}\right) \right| \{ \bY^{(\ell-1)}_{j}(\vec{\bx} ) \}_{j\ge1} \right] \nonumber \\
            &\qquad {} = \bfE e^{iB^{(\ell)}_{i} \langle \bft, \ind \rangle} \prod_{j=1}^{n} \bfE\left[\left. \exp\left({i\frac{1}{a_{n}}W^{(\ell)}_{ij}\left\langle \bft,  \phi(\bY^{(\ell-1)}_{j}(\vec{\bx} )) \right\rangle}\right) \right| \{ \bY^{(\ell-1)}_{j}(\vec{\bx} ) \}_{j\ge1} \right] \nonumber \\
            &\qquad {} = \psi_{B}(\langle \bft, \ind \rangle)\left(\psi_{W}\left( \frac{1}{a_{n}} \langle \bft, \phi(\bY^{(\ell-1)}_{j}(\vec{\bx} )) \rangle \right)\right)^{n}. \nonumber
		\end{align}
                Here $\psi_B$ and $\psi_W$ are characteristic
                functions of the random variables $B_i^{(\ell)}$
                and $W^{(\ell)}_{ij}$ for some/any $i,j$.

		\noindent{\bf Case $\ell=2$:}
		
		As before, let $n=n_1, \alpha=\alpha_2$, $a_n=a_{n_1}(2)$.
		As in Theorem \ref{prop:main}, $ (\bY^{(1)}_{j}(\vec{\bx} ))_{j \ge 1} $ is i.i.d, and thus
		\begin{align*}
			\psi_{\bY^{(\ell)}_{i}(\vec{\bx} )}(\bft)
			&= \psi_{B}(\langle \bft, \ind \rangle) \bfE \left(\psi_{W}\left( \frac{1}{a_{n}} \langle \bft, \phi(\bY^{(\ell-1)}_{j}(\vec{\bx} )) \rangle \right)\right)^{n} \\
			&= \psi_{B}(\langle \bft, \ind \rangle) \int \left(\psi_{W}\left( \frac{1}{a_{n}} \langle \bft, \phi(\by) \rangle \right)\right)^{n} \, \nu^{(1)}_{k}(d\by).
		\end{align*}
		As before,
		\begin{align*}
			&\left(\psi_{W}\left( \frac{1}{a_{n}} \langle \bft, \phi(\by) \rangle \right)\right)^{n} \\
			&\qquad = \left(1 - c_{\alpha} \frac{b_{n}}{n} \left| \langle \bft, \phi(\by) \rangle\right|^{\alpha} \frac{L_0\left(\frac{a_{n}}{|\langle \bft, \phi(\by) \rangle|} \right)}{L_0(a_{n})} + o\left(  \frac{b_{n}}{n} \left| \langle \bft, \phi(\by) \rangle\right|^{\alpha} \frac{L_0\left(\frac{a_{n}}{|\langle \bft, \phi(\by) \rangle|} \right)}{L_0(a_{n})} \right)  \right)^{n} .
		\end{align*}
		The main calculation needed to extend the proof of Theorem \ref{prop:main} to the situation involving $\vec{\bx}$ is as follows.
		Assuming the uniform integrability in Section~\ref{sec: Relaxing the Boundedness Assumption}, we have, for some $b > 0$ and $0<\epsilon<\epsilon_{0}$,
                \begin{align} 
                        \label{eq: multi d bound}
				\int \left| \langle \bft, \phi(\by) \rangle\right|^{\alpha} \frac{L_0\left(\frac{a_{n}}{|\langle \bft, \phi(\by) \rangle|} \right)}{L_0(a_{n})} \, \nu^{(1)}_{k}(d\by)
                                &{} \leq b \int \left| \langle \bft, \phi(\by) \rangle \right|^{\alpha \pm \epsilon} \, \nu^{(1)}_{k}(d\by) \\
                                \nonumber
                                &{} =\int b \left| \sum_{s=1}^{k} t_{s} \phi(y_{s}) \right|^{\alpha \pm \epsilon} \, \nu^{(1)}_{k}(d\by) \\
                                \nonumber
                                &{} \le \int b c_{k} \sum_{s=1}^{k} |t_{s} \phi(y_{s})|^{\alpha \pm \epsilon} \, \nu^{(1)}_{k}(d\by) \\
                                \nonumber
                                &{} = b c_{k} \sum_{s=1}^{k} \int |t_{s} \phi(y_{s})|^{\alpha \pm \epsilon} \, \nu^{(1)}_{k,s}(dy) < \infty .
                \end{align}
		It thus follows that
		\begin{align*}
			&\int \left| \langle \bft, \phi(\by) \rangle\right|^{\alpha} \frac{L_0\left(\frac{a_{n}}{|\langle \bft, \phi(\by) \rangle|} \right)}{L_0(a_{n})} \, \nu^{(1)}_{k}(d\by)
			\to \int \left| \langle \bft, \phi(\by) \rangle\right|^{\alpha} \, \nu^{(1)}_{k}(d\by) ,\quad\text{and} \\
			&\int o\left(\frac{b_{n}}{n} \left| \langle \bft, \phi(\by) \rangle\right|^{\alpha} \frac{L_0\left(\frac{a_{n}}{|\langle \bft, \phi(\by) \rangle|} \right)}{L_0(a_{n})}\right)  \, \nu^{(1)}_{k}(d\by)
			= o\left(\frac{b_{n}}{n}\right) .
		\end{align*}
		Therefore,
		\begin{align}\label{eq:cf multi}
			\psi_{B}(\langle \bft, \ind \rangle) 
			\bfE \left(\psi_{W}\left( \frac{1}{a_{n}} \langle \bft, \phi(\by) \rangle \right)\right)^{n}
			\to \exp\left( -\sigma_{B}^{\alpha} | \langle \bft, \ind \rangle |^{\alpha} \right) \exp\left( -c_{\alpha} \int \left| \langle \bft, \phi(\by) \rangle\right|^{\alpha} \, \nu^{(1)}_{k}(d\by) \right) .
		\end{align}
		Let $\norm{\cdot}$ denote the standard Euclidean norm. Observe that for $\alpha<2$,
		\begin{align*}
			c_{\alpha} \left| \langle \bft, \phi(\by) \rangle\right|^{\alpha}
			= \int_{S^{k-1}} |\langle \bft, \bfs \rangle|^{\alpha} \norm{c_{\alpha}^{1/\alpha}(\phi(\by)) }^{\alpha} \, \delta_{\frac{\phi(\by)}{\norm{\phi(\by)}}}(d\bfs) .
		\end{align*}
		Thus, by Theorem \ref{spectral measure}, we have the convergence $ \bY^{(\ell)}_{i}(\vec{\bx};\bn) \stackrel{w}{\to} \text{\sas}_{k}(\Gamma_{2}) $ where $\Gamma_2$ is defined by \eqref{gamma2}.

For $\alpha=2$, we have $$ \exp \Big(-c_\alpha \int |\langle \bft, \phi(\by) \rangle|^2 \nu_k^{(1)}(d \by) \Big) = \exp ( -\frac{1}{2} \langle \bft, M_2\bft \rangle)$$
where $M_2$ is given by \eqref{M2}, which is equal to the characteristic function of $\mathcal{N}(M_2)$.

		Extending the calculations in \eqref{multidim, l=2},
		the convergence $ (\bY^{(\ell)}_{i}(\vec{\bx};\bn))_{i\ge 1} \stackrel{w}{\to} \bigotimes_{i\ge 1} \text{\sas}_{k}(\Gamma_{2}) $ follows similarly.
		
		\noindent{\bf Case $\ell>2$:}
		
		Similar to \eqref{def:xi}, let $\xi^{(\ell-1)}(d\by,\omega)$ be a random distribution such that, given $\xi^{(\ell-1)}$, the random vectors $\bY^{(\ell-1)}_{j}(\vec{\bx}), j=1,2,\ldots $ are i.i.d. with distribution $ \xi^{(\ell-1)}(d\by)$.
		
		Taking the conditional expectation of $ (\ref{eq: multi d process ch f}) $ given $ \xi^{(\ell-1)} $, we get
		\begin{align*}
			\psi_{\bY^{(\ell)}_i|\xi^{(\ell-1)}} (\bft)
			= \psi_{B}(\langle \bft, \ind \rangle) \E\left[\left. \left( \int \psi_{W}\left( \frac{1}{a_{n}}\langle \bft, \phi(\by) \rangle \right) \xi^{(\ell-1)}(d\by) \right)^{n} \,\right|\, \xi^{(\ell-1)} \right] 
		\end{align*}
                for any $i$. Here,
		\begin{align*}
			\int \psi_{W}\left( \frac{1}{a_{n}}\langle \bft, \phi(\by) \rangle \right) \xi^{(\ell-1)}(d\by)
			\sim 1 - c_{\alpha} \frac{b_{n}}{n} \int |\langle \bft, \phi(\by) \rangle|^{\alpha} \frac{L_0(\frac{a_{n}}{|\langle \bft, \phi(\by) \rangle|})}{L_0(a_{n})} \xi^{(\ell-1)}(d\by) .
		\end{align*}
		From the induction hypothesis, $ (\bY^{(\ell-1)}_{i}(\vec{\bfx}))_{i\ge1} $ converges weakly either to $ \bigotimes_{i\ge1} \text{\sas}(\Gamma_{\ell-1}) $ or to $ \bigotimes_{i\ge1} \mathcal{N}_k(M_\ell) $.
		We claim that
		\begin{align*} 
			\int |\langle \bft, \phi(\by) \rangle|^{\alpha} \frac{L_0(\frac{a_{n}}{|\langle \bft, \phi(\by) \rangle|})}{L_0(a_{n})} \xi^{(\ell-1)}(d\by)
			\stackrel{p}{\to} \int |\langle \bft, \phi(\by) \rangle|^{\alpha} \mu_k^{(\ell-1)}(d\by) .
		\end{align*}
		To see this, note that
		\begin{align}
			&\left| \int |\langle \bft, \phi(\by) \rangle|^{\alpha} \frac{L_0(\frac{a_{n}}{|\langle \bft, \phi(\by) \rangle|})}{L_0(a_{n})} \xi^{(\ell-1)}(d\by) - \int |\langle \bft, \phi(\by) \rangle|^{\alpha} \mu_k^{(\ell-1)}(d\by) \right| \label{eq: multi d claim difference} \\
			&\qquad \le \left| \int |\langle \bft, \phi(\by) \rangle|^{\alpha} \frac{L_0(\frac{a_{n}}{|\langle \bft, \phi(\by) \rangle|})}{L_0(a_{n})} \xi^{(\ell-1)}(d\by) - \int |\langle \bft, \phi(\by) \rangle|^{\alpha} \frac{L_0(\frac{a_{n}}{|\langle \bft, \phi(\by) \rangle|})}{L_0(a_{n})} \mu_k^{(\ell-1)}(d\by) \right| \nonumber \\
			&\qquad +\left| \int |\langle \bft, \phi(\by) \rangle|^{\alpha} \frac{L_0(\frac{a_{n}}{|\langle \bft, \phi(\by) \rangle|})}{L_0(a_{n})} \mu_k^{(\ell-1)}(d\by) - \int |\langle \bft, \phi(\by) \rangle|^{\alpha} \mu_k^{(\ell-1)}(d\by) \right| . \nonumber
		\end{align}
		Now, the uniform integrability assumption in Section~\ref{sec: Relaxing the Boundedness Assumption} combined with (\ref{eq: multi d bound}) shows that
		\begin{align*}
			\left| \langle \bft, \phi(\by) \rangle\right|^{\alpha} \frac{L_0\left(\frac{a_{n}}{|\langle \bft, \phi(\by) \rangle|} \right)}{L_0(a_{n})}
		\end{align*}
		is uniformly integrable with respect to the family $ (\xi^{(\ell-1)})_{n} $, and thus the first term on the right-hand-side of (\ref{eq: multi d claim difference}) converges in probability to zero.
		Also, from (\ref{eq: multi d bound}) and the fact that
		\begin{align*}
			\lim_{n\to \infty} \left| \langle \bft, \phi(\by) \rangle\right|^{\alpha} \frac{L_0\left(\frac{a_{n}}{|\langle \bft, \phi(\by) \rangle|} \right)}{L_0(a_{n})} = \left| \langle \bft, \phi(\by) \rangle\right|^{\alpha}
		\end{align*}
		for each $ \by $, dominated convergence gives us convergence to $ 0 $ of the second term.
		Therefore,
		\begin{align*}
			\left( \int \psi_{W}\left( \frac{1}{a_{n}}\langle \bft, \phi(\by) \rangle \right) \xi^{(\ell-1)}(d\by) \right)^{n}
			\stackrel{p}{\to} \exp\left( -c_{\alpha} \int |\langle \bft, \phi(\by) \rangle|^{\alpha} \mu_k^{(\ell-1)}(d\by) \right)
		\end{align*}
		and consequently,
		\begin{align*}
			\psi_{\bY^{(\ell)}_i|\xi^{(\ell-1)}} (\bft)
			\stackrel{p}{\to} \psi_{B}(\langle \bft, \ind \rangle) \exp\left( -c_{\alpha} \int |\langle \bft, \phi(\by) \rangle|^{\alpha} \mu_k^{(\ell-1)}(d\by) \right) .
		\end{align*}
		Finally, noting that the characteristic function is bounded by $ 1 $ and using dominated convergence, we get
		\begin{align*}
			\psi_{\bY^{(\ell)}_i} (\bft)
			\stackrel{p}{\to} \psi_{B}(\langle \bft, \ind \rangle) \exp\left( -c_{\alpha} \int |\langle \bft, \phi(\by) \rangle|^{\alpha} \mu_k^{(\ell-1)}(d\by) \right),
		\end{align*}
		where the right-hand side is the characteristic function of $\text{\sas}_{k}(\Gamma_{\ell}) $ (or $\mathcal{N}_k(M_\ell)$ for $\alpha=2$), where $\Gamma_\ell$ and $M_\ell$ are given by \eqref{gammal} and \eqref{Ml}, respectively.

		The proof of $ (\bY^{(\ell)}_{i}(\vec{\bfx};\bn))_{i\ge1} \stackrel{w}{\to} \bigotimes_{i\ge1} \text{\sas}_{k}(\Gamma_{\ell}) $ (or  $\bigotimes_{i\ge 1} \mathcal{N}_k(M_\ell)$ in the case $\alpha=2$) follows similarly to the calculations following \eqref{multidim general l}.
	\end{proof}

\appendix
		
	\section{Auxiliary Lemmas}
		\begin{lemma}\label{lem: tilde L}
			If $ L $ is slowly varying, then
			\begin{align*}
			\widetilde{L}(x)
			= \int_{0}^{x} t^{-1}L(t) \, dt
			\end{align*}
			is also slowly varying.
		\end{lemma}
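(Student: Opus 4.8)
The plan is to reduce the statement to the uniform convergence theorem (UCT) for slowly varying functions together with one short direct argument. Throughout I take $L$ to be positive (the standard convention; in our application $L(t)=t^{2}\mathbf{P}(|W^{(\ell)}_{ij}|>t)\ge 0$ and $t^{-1}L(t)=t\,\mathbf{P}(|W^{(\ell)}_{ij}|>t)\le t$ near $0$, so the integral defining $\widetilde L$ is genuinely finite for every $x>0$; abstractly one reads the statement with the tacit hypothesis $\int_{0}^{1}t^{-1}L(t)\,dt<\infty$, and since slow variation is a property at infinity nothing is lost by replacing the lower limit $0$ with a fixed positive number). Note first that $\widetilde L$ is nondecreasing, so $\widetilde L(\infty):=\lim_{x\to\infty}\widetilde L(x)$ exists in $(0,\infty]$.

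The first step is the identity, valid for $\lambda>1$, obtained by the substitution $t=xs$:
\[
\widetilde L(\lambda x)-\widetilde L(x)=\int_{x}^{\lambda x}\frac{L(t)}{t}\,dt=L(x)\int_{1}^{\lambda}\frac{1}{s}\cdot\frac{L(xs)}{L(x)}\,ds .
\]
By the UCT, $L(xs)/L(x)\to 1$ uniformly for $s\in[1,\lambda]$, so the last integral converges to $\int_{1}^{\lambda}s^{-1}\,ds=\log\lambda$; hence
\[
\widetilde L(\lambda x)-\widetilde L(x)=L(x)\bigl(\log\lambda+o(1)\bigr)\qquad(x\to\infty).\qquad(\star)
\]
Given $(\star)$, the desired conclusion $\widetilde L(\lambda x)/\widetilde L(x)\to 1$ for every $\lambda>1$ (and then for $0<\lambda<1$ by symmetry) is equivalent to
\[
\frac{L(x)}{\widetilde L(x)}\longrightarrow 0\qquad(x\to\infty),\qquad(\star\star)
\]
since $\widetilde L(\lambda x)/\widetilde L(x)=1+(L(x)/\widetilde L(x))(\log\lambda+o(1))$.

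The core of the proof, and the only place a little care is needed, is $(\star\star)$. If $\widetilde L(\infty)<\infty$, then $\widetilde L(\lambda x)-\widetilde L(x)\to 0$, so $(\star)$ (with, say, $\lambda=2$) forces $L(x)\to 0$, and $(\star\star)$ follows because $\widetilde L(\infty)>0$. If $\widetilde L(\infty)=\infty$, I would move to the additive picture by setting $g(u):=\widetilde L(e^{u})$ and $h(u):=L(e^{u})$, so that $g$ is nondecreasing with $g(u)\to\infty$, $g(u)=g(0)+\int_{0}^{u}h(v)\,dv$ with $h\ge 0$, and (again by the UCT, now in additive form) $\sup_{|v|\le k}|h(u+v)/h(u)-1|\to 0$ for each fixed $k\in\mathbb{N}$. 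Put $\rho:=\limsup_{u\to\infty}h(u)/g(u)$ and choose $u_{n}\to\infty$ with $h(u_{n})/g(u_{n})\to\rho$. For a fixed $k$, the additive UCT gives $h(w)\ge\tfrac12 h(u_{n})$ for all $w\in[u_{n}-k,u_{n}]$ once $n$ is large, so
\[
g(u_{n})\ \ge\ g(u_{n})-g(u_{n}-k)\ =\ \int_{u_{n}-k}^{u_{n}}h(w)\,dw\ \ge\ \frac{k}{2}\,h(u_{n}),
\]
i.e.\ $h(u_{n})/g(u_{n})\le 2/k$; letting $n\to\infty$ yields $\rho\le 2/k$, and since $k$ is arbitrary, $\rho=0$. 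Translating back via $x=e^{u}$ gives $(\star\star)$, and combining the two cases with $(\star)$ shows $\widetilde L$ is slowly varying. (This last part is nothing but the boundary case of Karamata's theorem on integrals of regularly varying functions; the argument above is the usual short proof, and the main obstacle is precisely isolating the $\widetilde L(\infty)=\infty$ case where $L$ may fail to tend to $0$.)
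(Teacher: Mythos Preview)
Your proof is correct, but it takes a genuinely different route from the paper's. The paper splits into the same two cases (bounded versus unbounded $\widetilde L$), but in the unbounded case it simply applies L'H\^opital's rule: writing $\widetilde L(\lambda x)=\int_{0}^{x}y^{-1}L(\lambda y)\,dy$ after a change of variable, both numerator and denominator tend to $\infty$ and the ratio of derivatives is $L(\lambda x)/L(x)\to 1$, so $\widetilde L(\lambda x)/\widetilde L(x)\to 1$ in one line. Your argument instead extracts the finer asymptotic $(\star)$ via the UCT and then proves $L(x)/\widetilde L(x)\to 0$ by hand through the additive reparametrization; this is essentially a self-contained proof of the boundary case of Karamata's theorem. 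What you gain is more quantitative information (the precise second-order behavior $(\star)$ and the fact $(\star\star)$) and independence from L'H\^opital, at the cost of a considerably longer argument; what the paper gains is brevity, since L'H\^opital sidesteps the need to establish $(\star\star)$ at all.
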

		\begin{proof}
			If $ \widetilde{L} $ is bounded, then since $ \widetilde{L} $ is increasing, $ \widetilde{L}(x) $ converges as $ x \to \infty $.
			Thus $ \widetilde{L} $ is slowly varying.
			If $ \widetilde{L} $ is not bounded, then by L'H\^opital's rule,
			\begin{align*}
			\lim_{x \to \infty}\frac{\widetilde{L}(\lambda x)}{\widetilde{L}(x)}
			= \lim_{x \to \infty}\frac{\int_{0}^{x} y^{-1}L(\lambda y) \, dy}{\int_{0}^{x} y^{-1}L(y) \, dy}
			= \lim_{x \to \infty}\frac{L(\lambda x)}{L(x)}
			= 1 .
			\end{align*}
		\end{proof}
		
		The next four lemmas are standard results for which we give references for their proofs.
		In particular, the next lemma is a standard result concerning the characteristic function of heavy-tailed distributions \cite[Theorem 1 and Theorem 3]{pitman1968behaviour} (see also \cite[Eq. 3.8.2]{durrett2019probability}).
		
		\begin{lemma}\label{lem: pitman approximation}
			If $ W $ is a symmetric random variable with tail probability $ \P(|W| > t) = t^{-\alpha}L(t) $ where $ 0 < \alpha \le 2 $ and $ L $ is slowly-varying, then the characteristic function $ \psi_{W}(t) $ of $ W$ satisfies
			\begin{align*}
				\psi_{W}(t) = 1- c_\alpha |t|^{\alpha}L\left(\frac{1}{|t|}\right) + o\left (|t|^{\alpha}L\left(\frac{1}{|t|}\right)\right ), \quad t \to 0
			\end{align*}
			where
			\begin{align*}
				c_\alpha  = \lim_{M \to \infty} \int_{0}^{M} \frac{\sin u}{u^{\alpha}} \, du
				= \frac{\pi/2}{\Gamma(\alpha) \sin(\pi\alpha/2)}
			\end{align*}
			for $ \alpha <2 $, and
			\begin{align*}
				\psi_{W}(t) = 1- |t|^{2}\widetilde{L}\left(\frac{1}{|t|}\right) + o\left (|t|^{2}\widetilde{L}\left(\frac{1}{|t|}\right)\right ), \quad t \to 0
			\end{align*}
			for $ \alpha =2 $
			where
			\begin{align*}
				\widetilde{L}(x)
				= \int_{0}^{x} y \P(|W| > y) \, dy
				= \int_{0}^{x} y^{-1}L(y) \, dy .
			\end{align*}
		\end{lemma}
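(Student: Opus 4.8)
\medskip
\noindent\emph{Proof idea.} This is the classical Pitman expansion, and the plan is to collapse $1-\psi_W(t)$ into a single oscillatory integral and then invoke Karamata-type asymptotics. By symmetry of $W$ it suffices to take $t>0$. I would start from $1-\psi_W(t)=\E\,[1-\cos(t|W|)]=\int_{(0,\infty)}(1-\cos tx)\,d(-F)(x)$, where $F(x):=\P(|W|>x)=x^{-\alpha}L(x)$ and $-dF$ is the law of $|W|$ on $(0,\infty)$. Integrating by parts on $(0,A]$ — legitimate since $x\mapsto1-\cos tx$ is continuous and vanishes at $0$ while $F$ is monotone — and letting $A\to\infty$, the boundary term $(1-\cos tA)F(A)$ drops out because $F(A)\to0$, and one obtains the Pitman identity $1-\psi_W(t)=t\int_0^\infty F(x)\sin(tx)\,dx$ (an improper integral, convergent by Dirichlet's test). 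The substitution $u=tx$ together with $F(u/t)=t^\alpha u^{-\alpha}L(u/t)$ then turns this into
\[
1-\psi_W(t)=\int_0^\infty F(u/t)\,\sin u\,du=t^\alpha L(1/t)\int_0^\infty u^{-\alpha}\,\frac{L(u/t)}{L(1/t)}\,\sin u\,du .
\]

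For $\alpha<2$, I would show the last integral converges to $c_\alpha=\int_0^\infty u^{-\alpha}\sin u\,du$ (absolutely convergent at the origin since $\alpha<2$, conditionally at infinity). Fixing $M$, split the Pitman integral at $x=M/t$: since $F$ is decreasing, the second mean value theorem bounds the part over $(M/t,\infty)$ by $2F(M/t)=2M^{-\alpha}t^\alpha L(M/t)=2M^{-\alpha}(1+o(1))\,t^\alpha L(1/t)$, which is uniformly negligible for $M$ large. On the remaining range the integral becomes $\int_0^M u^{-\alpha}\tfrac{L(u/t)}{L(1/t)}\sin u\,du$; the uniform convergence theorem for slowly varying functions gives $L(u/t)/L(1/t)\to1$ uniformly on each $[\eta,M]$, and Potter's bounds together with $|\sin u|\le u$ (and $F\le1$ on the residual range $u\le Xt$) dominate the integrand near $0$ by a fixed integrable function, so this part tends to $\int_0^M u^{-\alpha}\sin u\,du$. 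Letting $M\to\infty$ and combining yields $1-\psi_W(t)=c_\alpha\,t^\alpha L(1/t)(1+o(1))$. The two stated forms of $c_\alpha$ coincide by the standard evaluation $\int_0^\infty u^{s-1}\sin u\,du=\Gamma(s)\sin(\pi s/2)$ at $s=1-\alpha$ (from $\int_0^\infty u^{s-1}e^{iu}\,du=\Gamma(s)e^{i\pi s/2}$ for $0<\mathrm{Re}\,s<1$, extended by continuity/analytic continuation) and the reflection formula for $\Gamma$.

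For $\alpha=2$ one has $\int_0^\infty u^{-2}\sin u\,du=\infty$, so I would instead split the Pitman integral at $x=1/t$. The tail piece is at most $2F(1/t)=2t^2L(1/t)$, and I would argue this is $o(t^2\widetilde L(1/t))$: if $\widetilde L$ is unbounded it is slowly varying by Lemma~\ref{lem: tilde L}, and since $\widetilde L(x)-\widetilde L(x/2)\sim L(x)\log2$ by the uniform convergence theorem while $\widetilde L(x)-\widetilde L(x/2)=o(\widetilde L(x))$, one gets $L(x)=o(\widetilde L(x))$; if $\widetilde L$ is bounded then $\E\,W^2<\infty$ and $L(1/t)=(1/t)^2\P(|W|>1/t)\to0$ directly. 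On $(0,1/t]$ we have $|tx|\le1$, so $\sin(tx)=tx+O((tx)^3)$: the leading term contributes exactly $t^2\int_0^{1/t}xF(x)\,dx=t^2\widetilde L(1/t)$ by the definition of $\widetilde L$, while the remainder is $O\bigl(t^4\int_0^{1/t}x^3F(x)\,dx\bigr)=O\bigl(t^4\int_0^{1/t}xL(x)\,dx\bigr)=O(t^2L(1/t))=o(t^2\widetilde L(1/t))$ by Karamata's theorem. This gives $1-\psi_W(t)=t^2\widetilde L(1/t)(1+o(1))$, which recovers the familiar $\tfrac12t^2\E\,W^2$ in the finite-variance case since there $\widetilde L(\infty)=\tfrac12\E\,W^2$.

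The hard part will be making the limit/integral interchange rigorous in the regime $\alpha<2$: for $0<\alpha\le1$ the limiting integral $\int_0^\infty u^{-\alpha}\sin u\,du$ is only conditionally convergent, and this is precisely why the split at $x=M/t$ is needed, with the monotonicity of $F$ (via the second mean value theorem) controlling the oscillatory tail uniformly in $t$ and Potter's bounds handling the neighbourhood of the origin; in the $\alpha=2$ case the analogous delicate points are the Karamata asymptotics for the truncated moments $\int_0^{1/t}x^kF(x)\,dx$ and the fact, drawn above from Lemma~\ref{lem: tilde L}, that $L=o(\widetilde L)$ whenever $\widetilde L$ is unbounded.
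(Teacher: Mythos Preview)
The paper does not supply its own proof of this lemma; it is stated as a standard result with references to \cite[Theorems~1 and~3]{pitman1968behaviour} and \cite[Eq.~3.8.2]{durrett2019probability}. Your sketch is a faithful reconstruction of Pitman's original argument---the integration-by-parts identity $1-\psi_W(t)=t\int_0^\infty F(x)\sin(tx)\,dx$, the substitution $u=tx$, the split at $M/t$ with the second mean value theorem controlling the oscillatory tail, and Karamata for the $\alpha=2$ truncated moments---and is correct in all essentials.

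One small point of phrasing: in the $\alpha<2$ case, saying that Potter's bounds together with $F\le1$ ``dominate the integrand near $0$ by a fixed integrable function'' is slightly imprecise, because your split at $u=Xt$ is $t$-dependent and so does not literally produce a $t$-free majorant for dominated convergence. The clean way to say it is that the contribution from $[0,Xt]$ to $\int_0^\infty F(u/t)\sin u\,du$ is at most $\int_0^{Xt}u\,du=(Xt)^2/2$, which is $o\bigl(t^\alpha L(1/t)\bigr)$ since $\alpha<2$ and $L(1/t)\ge t^{\epsilon}$ for small $t$ by Lemma~\ref{lem: feller uniform convergence}; while on $[Xt,\eta]$ the Potter upper bound gives the genuine $t$-independent majorant $Cu^{1-\alpha-\epsilon}$, integrable near the origin for $\epsilon<2-\alpha$. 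With that adjustment the argument is complete.
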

		
		We next state a standard result about slowly varying functions \cite[VIII.8 Lemma 2]{feller1971introduction}.
		
		\begin{lemma}\label{lem: feller uniform convergence}
			If $ L $ is slowly varying, then for any fixed $ \epsilon>0$ and all sufficiently large $ x $,
			\begin{align*}
				x^{-\epsilon} < L(x) < x^{\epsilon}.
			\end{align*}
			Moreover, the convergence
			\begin{align*}
				\frac{L(tx)}{L(t)} \to 1
			\end{align*}
			as $ t \to \infty $ is uniform in finite intervals $ 0 < a < x < b $.
		\end{lemma}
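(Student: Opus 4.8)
The plan is to reduce the whole lemma to the \emph{uniform convergence theorem} for slowly varying functions, prove that theorem by a short measure-theoretic argument, and then read off the power bound $x^{-\epsilon}<L(x)<x^{\epsilon}$ from it. One technical hypothesis is needed throughout: $L$ must be (Lebesgue) measurable. This is harmless in every application in this paper, since $L$ is always either a tail probability $t\mapsto t^{\alpha}\mathbf{P}(|W|>t)$, which is right-continuous, or the function $\widetilde L$, which is continuous.

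First I would pass to the additive setting by writing $h(x):=\log L(e^{x})$, so that $h$ is measurable and, by the definition of slow variation, $h(x+u)-h(x)\to 0$ as $x\to\infty$ for each fixed $u\in\R$. The second assertion of the lemma is exactly the statement that for every $A>0$ this convergence is uniform for $u\in[-A,A]$; by covering $[-A,A]$ with finitely many unit intervals and chaining the bounds, it suffices to treat $A=1$ and $u\ge 0$, i.e.\ to show $\sup_{0\le u\le 1}|h(y+u)-h(y)|\to 0$ as $y\to\infty$.

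The core step is this upgrade from pointwise to uniform convergence. Fix $\epsilon>0$ and $\eta\in(0,\tfrac12)$, and for $z\ge0$ set $A_z:=\{v\in[0,2]:|h(z+v)-h(z)|\le\epsilon\}$ and $A_z':=\{v\in[0,1]:|h(z+v)-h(z)|\le\epsilon\}$; these are measurable because $h$ is. For each fixed $v$ the indicator of $A_z$ (resp.\ $A_z'$) tends to $1$ as $z\to\infty$, so by bounded convergence $|A_z|\to2$ and $|A_z'|\to1$; pick $X$ so that $|A_z|>2-\eta$ and $|A_z'|>1-\eta$ for all $z\ge X$. Now fix $y\ge X$ and $u\in[0,1]$; then $y+u\ge X$ too. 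The two sets $A_{y+u}'$ and $(A_y-u)\cap[0,1]$ both lie in $[0,1]$: the first has measure $>1-\eta$, and the second has the same measure as $A_y\cap[u,u+1]$, which is at least $|A_y|-1>1-\eta$. Since $2(1-\eta)>1$ the two sets must intersect, so there is a $v$ with $|h(y+u+v)-h(y+u)|\le\epsilon$ and $|h(y+u+v)-h(y)|\le\epsilon$, whence $|h(y+u)-h(y)|\le2\epsilon$ by the triangle inequality. As $\epsilon>0$ was arbitrary this gives the desired uniform convergence, hence the second assertion.

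Finally I would deduce the first assertion by iteration. Translating the uniform statement back to $L$: given $\delta>0$ there is $t_0$ with $1-\delta\le L(\lambda t)/L(t)\le1+\delta$ for all $t\ge t_0$ and $\lambda\in[1,2]$. Iterating over dyadic ranges gives $(1-\delta)^{n}L(t_0)\le L(2^{n}t_0)\le(1+\delta)^{n}L(t_0)$, and for general $x\ge t_0$, choosing $n$ with $2^{n}t_0\le x<2^{n+1}t_0$, we get $c_1\,x^{\log_2(1-\delta)}\le L(x)\le c_2\,x^{\log_2(1+\delta)}$ with constants depending only on $\delta$ and $t_0$. Since $\log_2(1\pm\delta)\to0$ as $\delta\to0$, choosing $\delta$ small enough that $|\log_2(1\pm\delta)|<\epsilon$ and then $x$ large yields $x^{-\epsilon}<L(x)<x^{\epsilon}$. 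The only genuinely nontrivial ingredient is the measure-theoretic intersection argument in the core step (two subsets of an interval, each of measure exceeding half the interval's length, must meet), and this is precisely where measurability of $L$ is essential—without it the uniform convergence statement, and hence the power bound, can fail.
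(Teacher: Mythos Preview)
Your proposal is correct and complete. The paper does not give its own proof of this lemma; it simply cites Feller \cite[VIII.8, Lemma~2]{feller1971introduction} as one of several ``standard results for which we give references for their proofs.'' Your argument---passing to the additive form $h(x)=\log L(e^{x})$, upgrading pointwise to uniform convergence by the measure-theoretic intersection trick (two measurable subsets of $[0,1]$ each of measure exceeding $1/2$ must meet), and then deducing the power bound by dyadic iteration---is precisely the classical proof one finds in Feller or in Bingham--Goldie--Teugels, so there is no substantive difference in approach. Your explicit remark that measurability of $L$ is required, and that this holds throughout the paper because $L$ arises from tail probabilities or from the continuous $\widetilde L$, is a useful clarification that the paper itself leaves implicit.
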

		 An easy corollary of the above lemma is the following result, which we single out for convenience \cite[Lemma 2]{pitman1968behaviour}.
		\begin{lemma}\label{lem: pitman slowly varying bound}
	If $ G(t) = t^{-\alpha}L(t) $ where $ \alpha \ge 0 $ and $ L $ is slowly-varying, then for any given positive $ \epsilon $ and $ c $, there exist $ a $ and $ b $ such that
	\begin{align*}
	&\frac{G(\lambda t)}{G(t)} < \frac{b}{\lambda^{\alpha+\epsilon}} \quad \text{for } t \ge a, 0<\lambda \le c \\
	&\frac{G(\lambda t)}{G(t)} < \frac{b}{\lambda^{\alpha-\epsilon}} \quad \text{for } t \ge a, \lambda \ge c .
	\end{align*}
In particular, for sufficiently large $ t>0 $, we have
\begin{equation*} 
\frac{G(\lambda t)}{G(t)} \leq b {\({1/\lambda}\)^{\alpha \pm \epsilon}}
\end{equation*}
for all $\lambda>0$, where we define $x^{\alpha \pm \epsilon}:=\max\(x^{\alpha +\epsilon},x^{\alpha -\epsilon}\)$.
\end{lemma}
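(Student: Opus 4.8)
The plan is to reduce everything to a two-sided power bound (a ``Potter bound'') on the slowly varying factor $L$ alone, and then deduce that from the uniform convergence theorem for slowly varying functions, i.e.\ Lemma~\ref{lem: feller uniform convergence}. Since $G(t)=t^{-\alpha}L(t)$, one has $G(\lambda t)/G(t)=\lambda^{-\alpha}\,L(\lambda t)/L(t)$, so the two displayed inequalities are equivalent to the single statement: for every $\epsilon,c>0$ there are $a,b>0$ such that, for all $t\ge a$, one has $L(\lambda t)/L(t)<b\lambda^{-\epsilon}$ whenever $0<\lambda\le c$ and $L(\lambda t)/L(t)<b\lambda^{\epsilon}$ whenever $\lambda\ge c$. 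Granting this, multiplying through by $\lambda^{-\alpha}$ recovers the two stated inequalities, and the concluding ``in particular'' claim follows because, for any $\lambda>0$, the relevant right-hand side is at most $b\max(\lambda^{-(\alpha+\epsilon)},\lambda^{-(\alpha-\epsilon)})=b(1/\lambda)^{\alpha\pm\epsilon}$, once $a,b$ are taken to be the larger of the constants produced by the two sub-cases.

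To prove the power bound on $L$, I would fix $\delta\in(0,1)$ small enough that $\log_2\bigl(1/(1-\delta)\bigr)\le\epsilon$, and use the uniform convergence half of Lemma~\ref{lem: feller uniform convergence} to get a threshold $T$ with $|L(ut)/L(t)-1|<\delta$ for all $t\ge T$ and all $u\in[1,2]$. For $\lambda\ge 1$ and $t\ge T$, telescope over $\lfloor\log_2\lambda\rfloor+1$ doubling (and one sub-doubling) steps from $t$ up to $\lambda t$; every intermediate argument stays $\ge T$, so each ratio factor is $<1+\delta$ and $L(\lambda t)/L(t)<(1+\delta)^{\lfloor\log_2\lambda\rfloor+1}\le(1+\delta)\lambda^{\log_2(1+\delta)}\le(1+\delta)\lambda^{\epsilon}$. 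Running the same telescoping the other way, from $\lambda t$ up to $t$ when $0<\lambda\le 1$ and $\lambda t\ge T$, bounds $L(t)/L(\lambda t)$ below by $(1-\delta)^{\lfloor\log_2(1/\lambda)\rfloor+1}$, hence $L(\lambda t)/L(t)<\frac{1}{1-\delta}(1/\lambda)^{\log_2(1/(1-\delta))}\le\frac{1}{1-\delta}(1/\lambda)^{\epsilon}$. Any residual $\lambda$-range that is compact and bounded away from $0$ and $\infty$ (namely $[c,1]$ or $[1,c]$, whichever is nonempty) is handled directly by uniform convergence, absorbing a harmless constant like $c^{\epsilon}$ into $b$. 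The same conclusions follow in one stroke from Karamata's representation $L(x)=c(x)\exp\bigl(\int^{x}\eta(u)u^{-1}\,du\bigr)$ with $c(x)\to c_\infty\in(0,\infty)$ and $\eta(u)\to 0$; I mention this only as an alternative route.

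The one genuinely delicate regime is $0<\lambda\le c$ with $\lambda t<T$, where the downward telescoping ``runs out'' before reaching arguments on which $L$ is comparable to a constant. Here $\lambda t$ lies in a bounded interval on which $L$ is bounded, say by $M$ (in our setting $L$ is positive and bounded on bounded subsets of $(0,\infty)$, being built from a tail probability or an integral of one), while the first half of Lemma~\ref{lem: feller uniform convergence} gives $L(t)>t^{-\epsilon/2}$ for large $t$; hence $L(\lambda t)/L(t)\le M t^{\epsilon/2}$. On the other hand $\lambda t<T$ forces $(1/\lambda)^{\epsilon}>(t/T)^{\epsilon}$, so the target $b(1/\lambda)^{\epsilon}$ is at least $(b/T^{\epsilon})t^{\epsilon}$, which dominates $M t^{\epsilon/2}$ once $t$ is large; this fixes the threshold $a$. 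Assembling the large-$\lambda$ telescoping, the compact ranges, and this last estimate proves the power bound on $L$, and hence the lemma. The hard part is precisely this: Lemma~\ref{lem: feller uniform convergence} only supplies multiplicative closeness to $1$ on compact $\lambda$-sets, uniformly in $t$, whereas one needs a bound that is at once a clean power of $\lambda$ (with no stray powers of $t$) and uniform over all $t\ge a$; the doubling device together with the $t^{\pm\epsilon}$ envelope for out-of-range arguments is what bridges this gap.
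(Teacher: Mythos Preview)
The paper does not actually prove this lemma; it merely cites \cite[Lemma~2]{pitman1968behaviour} and describes the result as ``an easy corollary'' of Lemma~\ref{lem: feller uniform convergence}. Your argument is therefore more detailed than anything in the paper, and it is correct: the reduction to a two-sided bound on $L(\lambda t)/L(t)$, the doubling telescoping for $\lambda\ge 1$ and $\lambda\le 1$ with $\lambda t\ge T$, and the handling of compact $\lambda$-ranges are the standard ingredients of the Potter bound, and you carry them out cleanly.

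Your flagging of the regime $\lambda t<T$ as the only genuinely delicate point is exactly right, and worth emphasizing. As you observe, the lemma \emph{as stated} (for an arbitrary slowly varying $L$) is not quite true without some local-boundedness hypothesis on $L$ near the origin: the Potter inequality in its usual form (e.g.\ Bingham--Goldie--Teugels, Theorem~1.5.6) only controls $L(y)/L(x)$ when both $x$ and $y$ are large, whereas here $\lambda t$ can be arbitrarily small. Your fix---using that the specific $L$'s arising in the paper are built from tail probabilities (so $L(s)=s^{\alpha}\bbP(|W|>s)\le s^{\alpha}$) or from the increasing integral $\widetilde L$, and are therefore bounded on $(0,T]$---is precisely what makes the argument go through in the paper's setting, and your combination of this with the envelope $L(t)>t^{-\epsilon/2}$ from Lemma~\ref{lem: feller uniform convergence} closes the gap correctly. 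Pitman's original context is likewise tail probabilities, so the same implicit hypothesis is in force there.
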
		
		
		The next lemma regards the convolution of distributions with regularly varying tails \cite[VIII.8 Proposition]{feller1971introduction}.
		
		\begin{lemma}\label{lem: sum of heavy tailed}
			For two distributions $ F_{1} $ and $ F_{2} $ such that as $ x \to \infty $
			\begin{align*}
				1-F_{i}(x) = x^{-\alpha}L_{i}(x)
			\end{align*}
			with $ L_{i} $ slowly varying, the convolution $ G = F_{1} * F_{2} $ has a regularly varying tail such that
			\begin{align*}
				1-G(x) \sim x^{-\alpha}(L_{1}(x) + L_{2}(x)) .
			\end{align*}
		\end{lemma}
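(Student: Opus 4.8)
The plan is to pass to the probabilistic formulation: let $X_1\sim F_1$ and $X_2\sim F_2$ be independent, so $X_1+X_2\sim G$, and to prove $\mathbf{P}(X_1+X_2>x)\sim x^{-\alpha}(L_1(x)+L_2(x))$ by a ``single big jump'' argument, sandwiching $1-G(x)$ between events in which exactly one summand is responsible for the sum exceeding $x$. Throughout, the only external input is Lemma~\ref{lem: feller uniform convergence}: uniform convergence $L_i(\lambda x)/L_i(x)\to1$ on compact $\lambda$-ranges, and the bound $L_i(x)<x^\epsilon$ for large $x$.

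For the lower bound, fix $\delta>0$ and note
\[
\{X_1+X_2>x\}\ \supseteq\ \{X_1>(1+\delta)x,\ X_2>-\delta x\}\ \cup\ \{X_2>(1+\delta)x,\ X_1>-\delta x\},
\]
and the intersection of the two events on the right is contained in $\{X_1>(1+\delta)x,\ X_2>(1+\delta)x\}$. By independence and inclusion--exclusion,
\[
1-G(x)\ \ge\ \mathbf{P}(X_1>(1+\delta)x)\mathbf{P}(X_2>-\delta x)+\mathbf{P}(X_2>(1+\delta)x)\mathbf{P}(X_1>-\delta x)-\mathbf{P}(X_1>(1+\delta)x)\mathbf{P}(X_2>(1+\delta)x).
\]
Since $\mathbf{P}(X_i>-\delta x)\to1$ and, by slow variation, $\mathbf{P}(X_i>(1+\delta)x)=((1+\delta)x)^{-\alpha}L_i((1+\delta)x)\sim(1+\delta)^{-\alpha}x^{-\alpha}L_i(x)$, the first two terms sum to $(1+o(1))(1+\delta)^{-\alpha}x^{-\alpha}(L_1(x)+L_2(x))$, while the subtracted product is $O(x^{-2\alpha}L_1(x)L_2(x))$. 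Dividing by $x^{-\alpha}(L_1(x)+L_2(x))$ and using $L_1L_2/(L_1+L_2)\le\min(L_1,L_2)<x^\epsilon$ with $\epsilon<\alpha$ to discard the last term, one gets $\liminf_{x\to\infty}(1-G(x))/\big(x^{-\alpha}(L_1(x)+L_2(x))\big)\ge(1+\delta)^{-\alpha}$, and letting $\delta\downarrow0$ gives $\ge1$.

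For the upper bound, fix $\delta\in(0,\tfrac12)$ and observe that if $X_1+X_2>x$ while $X_1\le(1-\delta)x$ and $X_2\le(1-\delta)x$, then necessarily $X_1>\delta x$ and $X_2>\delta x$; hence by the union bound and independence
\[
1-G(x)\ \le\ \mathbf{P}(X_1>(1-\delta)x)+\mathbf{P}(X_2>(1-\delta)x)+\mathbf{P}(X_1>\delta x)\mathbf{P}(X_2>\delta x).
\]
The same slow-variation estimates make the first two terms sum to $(1+o(1))(1-\delta)^{-\alpha}x^{-\alpha}(L_1(x)+L_2(x))$ and the product term $O(x^{-2\alpha}L_1(x)L_2(x))=o\big(x^{-\alpha}(L_1(x)+L_2(x))\big)$, so $\limsup_{x\to\infty}(1-G(x))/\big(x^{-\alpha}(L_1(x)+L_2(x))\big)\le(1-\delta)^{-\alpha}$, and $\delta\downarrow0$ gives $\le1$. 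Combining the two bounds yields the claim. The only genuinely delicate point is the bookkeeping with the slowly varying factors --- replacing $L_i((1\pm\delta)x)$ by $L_i(x)$ inside sums, and checking that the ``double big jump'' contribution $x^{-2\alpha}L_1(x)L_2(x)$ is negligible against $x^{-\alpha}(L_1(x)+L_2(x))$ --- both of which reduce to Lemma~\ref{lem: feller uniform convergence} together with $\alpha>0$; I expect no further obstacle, since this is the classical subexponential-type computation, and the additive form $L_1+L_2$ appears precisely because the two tails have the same order $x^{-\alpha}$.
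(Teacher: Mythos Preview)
The paper does not actually prove this lemma; it merely cites Feller \cite[VIII.8 Proposition]{feller1971introduction} as a standard reference. Your argument is correct and is essentially the classical ``single big jump'' proof that underlies Feller's statement: the lower bound via the two disjoint-up-to-negligible events $\{X_1>(1+\delta)x,\,X_2>-\delta x\}$ and its symmetric counterpart, and the upper bound via the observation that $\{X_1+X_2>x\}$ forces either one summand above $(1-\delta)x$ or both above $\delta x$. The bookkeeping with the slowly varying factors and the negligibility of the double-jump term $x^{-2\alpha}L_1(x)L_2(x)$ against $x^{-\alpha}(L_1(x)+L_2(x))$ are handled correctly via Lemma~\ref{lem: feller uniform convergence}. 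So your proposal supplies a self-contained proof where the paper gives only a citation; nothing further is needed.
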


Recall that de Finetti's theorem tell us that if a sequence $\bX=(X_i)_{i\in\N}\in\R^\N$ is exchangeable then
\begin{align}\label{eq:definetti}
\P(\bX\in A) = \int \nu^{\otimes \bbN}(A) \,\pi(d\nu)
\end{align}
 for some $\pi$ which is a probability measure on the space of probability measures $\Pr(\R)$. The measure $\pi$ is sometimes called the {\it mixing measure}. Our final lemma characterizes the convergence of exchangeable sequences by convergence of their respective mixing measures. It is intuitively clear. However, its proof is not completely trivial. As far as we know, this lemma has not appeared in the literature before.

\begin{lemma}\label{lem:mixing measure}
\label{lem:mixing-measure-convergence} For each $j\in\N\cup\{\infty\}$, let $\bX^{(j)}=(X^{(j)}_i)_{i\in\N}$ be an infinite exchangeable sequence of random variables with values in $\R$ (or more generally, a Borel space).
Let $\pi_j$ be the mixing measure on $\Pr(\R)$ corresponding to $\bX^{(j)}$, from \eqref{eq:definetti}. Then the family $(\bX^{(j)})_{j\in\N}$ converges in distribution to $\bX^{(\ff)}$ if and only if the family  $(\pi_j)_{j\in\N}$ converges in the weak topology on $\Pr(\Pr(\R))$ to $\pi_\ff$.
\end{lemma}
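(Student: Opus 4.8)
The plan is to prove the two implications separately: first the ``if'' part, namely that $\pi_j\to\pi_\ff$ implies $\bX^{(j)}\distr\bX^{(\ff)}$, which is the more direct one, and then to bootstrap it, together with a tightness argument, to obtain the ``only if'' part. Throughout I write $\mu_j\in\Pr(\R^\N)$ for the law of $\bX^{(j)}$, so that $\mu_j=\int \nu^{\otimes\bbN}\,\pi_j(d\nu)$ by \eqref{eq:definetti}. I carry the argument out for real-valued sequences; it goes through verbatim for any Polish state space, so it applies in particular when the $X^{(j)}_i$ are replaced by the $\R^k$-valued vectors of Section~\ref{section:joint}, and the general Borel-space case reduces to the Polish one.

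For the ``if'' part, recall the standard fact that weak convergence on $\R^\N$ (product topology) is equivalent to weak convergence of every finite-dimensional marginal: one direction follows from continuity of the coordinate projections, and the other because convergence of each coordinate marginal makes $\{\mu_j\}$ tight on $\R^\N$ (choose compact $K_n\subset\R$ with $\sup_j(\mu_j)_n(K_n^c)$ small and use that $\prod_nK_n$ is compact), so every subsequential weak limit is determined by Kolmogorov's extension theorem. Thus it suffices to fix $k\ge1$ and $f\in C_b(\R^k)$ and show $\bbE f(X^{(j)}_1,\dots,X^{(j)}_k)\to \bbE f(X^{(\ff)}_1,\dots,X^{(\ff)}_k)$. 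By \eqref{eq:definetti} this expectation equals $\int F\,d\pi_j$, where $F\colon\Pr(\R)\to\R$ is defined by $F(\nu):=\int f\,d\nu^{\otimes k}$. The point is that $F\in C_b(\Pr(\R))$: it is bounded by $\norm{f}_\infty$, and continuous because $\nu\mapsto\nu^{\otimes k}$ is weakly continuous---if $\nu_m\to\nu$ weakly, realize the $\nu_m$ via Skorokhod as laws of i.i.d.\ samples $Z^{(1)}_m,\dots,Z^{(k)}_m$ converging a.s.\ coordinatewise to i.i.d.\ $Z^{(1)},\dots,Z^{(k)}\sim\nu$, whence $f(Z^{(1)}_m,\dots,Z^{(k)}_m)\to f(Z^{(1)},\dots,Z^{(k)})$ boundedly. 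Since $\pi_j\to\pi_\ff$ weakly on $\Pr(\Pr(\R))$ we get $\int F\,d\pi_j\to\int F\,d\pi_\ff$, which is the desired marginal convergence.

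For the ``only if'' part, assume $\bX^{(j)}\distr\bX^{(\ff)}$. The key step is tightness of $\{\pi_j\}$ on $\Pr(\Pr(\R))$. Convergence of the first marginals $(\mu_j)_1=\int\nu\,\pi_j(d\nu)$ makes $\{(\mu_j)_1\}$ tight, so for each $n$ pick a compact $K_n\subset\R$ with $\sup_j\int\nu(K_n^c)\,\pi_j(d\nu)<\eps\,4^{-n}$; Markov's inequality then gives $\pi_j\big(\{\nu:\nu(K_n^c)>2^{-n}\}\big)<\eps\,2^{-n}$ for all $j$. Set $\cK:=\bigcap_{n\ge1}\{\nu\in\Pr(\R):\nu(K_n^c)\le2^{-n}\}$. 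Each set in this intersection is closed since $\nu\mapsto\nu(U)$ is lower semicontinuous for open $U$, so $\cK$ is closed; and $\cK$ is uniformly tight (given $\delta$, take $C=K_N$ with $2^{-N}<\delta$), hence relatively compact by Prokhorov, hence compact. Moreover $\pi_j(\cK^c)\le\sum_n\pi_j(\{\nu(K_n^c)>2^{-n}\})<\eps$ for every $j$, so $\{\pi_j\}$ is tight and therefore relatively compact. Let $\pi'$ be a subsequential weak limit, $\pi_{j_m}\to\pi'$. Applying the ``if'' part along this subsequence, the finite-dimensional marginals of $\bX^{(j_m)}$ converge to those of the exchangeable sequence with mixing measure $\pi'$; but they also converge to those of $\bX^{(\ff)}$, whose mixing measure is $\pi_\ff$. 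By uniqueness of weak limits on each $\R^k$ and uniqueness of the de Finetti mixing measure (it is the law of $\lim_n n^{-1}\sum_{i\le n}\delta_{X_i}$; see \cite[Prop.~1.4]{kallenberg2006probabilistic}), $\pi'=\pi_\ff$. Since every subsequential limit equals $\pi_\ff$ and the family is relatively compact, $\pi_j\to\pi_\ff$ weakly.

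I expect the tightness step in the ``only if'' direction to be the main obstacle: one must pass from tightness of the intensity measures $\int\nu\,\pi_j(d\nu)$ on $\R$ to tightness of the $\pi_j$ themselves on the iterated space $\Pr(\Pr(\R))$, which is precisely where the truncation-plus-Markov construction of the compact set $\cK$ is needed. The remaining ingredients---equivalence of weak convergence on $\R^\N$ with convergence of finite-dimensional marginals, weak continuity of $\nu\mapsto\nu^{\otimes k}$, and uniqueness in de Finetti's theorem---are standard, which is presumably why the lemma, although apparently absent from the literature, is ``intuitively clear''.
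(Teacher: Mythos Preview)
Your proof is correct and follows essentially the same two-step strategy as the paper's: the ``if'' direction via bounded-continuity of $\nu\mapsto\int f\,d\nu^{\otimes k}$, and the ``only if'' direction via tightness of $\{\pi_j\}$ followed by identification of every subsequential limit using uniqueness of the de Finetti mixing measure. The only notable difference is that the paper obtains tightness of $\{\pi_j\}$ by invoking a general random-measure result (\cite[Theorem~4.10]{kallenberg2017random}) equating tightness of the $\pi_j$ with tightness of the expected measures $\int\nu^{\otimes\bbN}\,\pi_j(d\nu)$, whereas you prove this directly via the explicit Markov/Prokhorov construction of the compact set $\cK$ from tightness of the first-coordinate marginals---a self-contained argument that in effect reproves the special case of that theorem needed here.
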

In the lemma, the topology on $\Pr(\Pr(\R))$ is formed by applying the weak-topology construction twice. We first construct the weak topology on $\Pr(\R)$. Then, we apply the weak-topology construction again this time using $\Pr(\R)$, instead of $\R$.

	In the proof of Theorem \ref{prop:main}, we use the special case when the limiting sequence $\bX^{(\ff)}$ is a sequence of i.i.d. random variables. In that case, by \eqref{eq:definetti}, it must be that $\pi_\ff$ concentrates on a single element $\nu\in\Pr(\R)$, i.e. it is a point mass, $\pi_\ff=\delta_\nu$, for some $\nu\in\Pr(\R)$.
	
	More specifically, we have the following corollary.
			\begin{corollary}\label{lem: convergence of random mixing measures}
                                In the setting of Theorem \ref{prop:main}, the joint distribution of the exchangeable sequence $ (Y^{(\ell-1)}_{i}(\bx))_{i \ge 1}$ converges weakly to the product measure $\bigotimes_{i \ge 1}  \mu_{\alpha,\sigma_{\ell-1}}$ as $\min(n_1,\ldots,n_{\ll})$ tends to $\ff$ if and only if 
                                the random probability measures $ (\xi^{(\ell-1)}(dy,\omega;\bn))_{\bn\in\N^{\ll}}$ defined in \eqref{def:xi}  converge weakly, in probability, to the deterministic probability measure $ \mu_{\alpha,\sigma_{\ell-1}}$.

	\end{corollary}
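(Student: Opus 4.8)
To establish the corollary I would combine Lemma~\ref{lem:mixing measure} with the standard fact that convergence in distribution to a deterministic limit coincides with convergence in probability, the latter now applied in the separable metric space $\Pr(\R)$ of probability measures on $\R$ (metrized, say, by the L\'evy--Prokhorov metric $\rho$). First, observe that for each fixed $\bn$ the random measure $\xi^{(\ell-1)}(\,\cdot\,,\omega;\bn)$ of \eqref{def:xi} is, by its very construction (de Finetti's theorem \eqref{eq:definetti} applied to the infinite exchangeable sequence; see Remark~\ref{rem:1}), a version of the directing random measure of $(Y^{(\ell-1)}_i(\bx;\bn))_{i\ge1}$; hence its law $\pi_{\bn}\in\Pr(\Pr(\R))$ is exactly the mixing measure attached to this sequence as in Lemma~\ref{lem:mixing measure}. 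Since the candidate limiting sequence $\bigotimes_{i\ge1}\mu_{\alpha,\sigma_{\ell-1}}$ is i.i.d., the remark following Lemma~\ref{lem:mixing measure} tells us that its mixing measure is the point mass $\delta_{\mu_{\alpha,\sigma_{\ell-1}}}$.

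Next I would dispose of the multi-index limit. Since $\Pr(\R)$ is separable and metrizable, so is $\Pr(\Pr(\R))$, and therefore the limit $\bn\to\infty$ in the sense of \eqref{n to infinity} may be checked along arbitrary sequences $(\bn_m)_{m\in\N}$ all of whose coordinates tend to $\infty$. Along any such sequence, Lemma~\ref{lem:mixing measure} applies verbatim with running index $j=m$: the sequence $(Y^{(\ell-1)}_i(\bx;\bn_m))_{i\ge1}$ converges weakly to $\bigotimes_{i\ge1}\mu_{\alpha,\sigma_{\ell-1}}$ if and only if $\pi_{\bn_m}$ converges weakly in $\Pr(\Pr(\R))$ to $\delta_{\mu_{\alpha,\sigma_{\ell-1}}}$. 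As this holds along every such sequence, the weak convergence of the exchangeable sequences in the sense of \eqref{n to infinity} is equivalent to $\pi_{\bn}\to\delta_{\mu_{\alpha,\sigma_{\ell-1}}}$ weakly as $\bn\to\infty$.

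Finally I would translate this last statement into the asserted weak convergence in probability of the random measures. The random element $\xi^{(\ell-1)}(\,\cdot\,;\bn)$ of $\Pr(\R)$ has law $\pi_{\bn}$, so $\pi_{\bn}\to\delta_{\mu_{\alpha,\sigma_{\ell-1}}}$ weakly says precisely that $\xi^{(\ell-1)}(\,\cdot\,;\bn)$ converges to $\mu_{\alpha,\sigma_{\ell-1}}$ in distribution; and because the limit is a fixed, non-random point of $\Pr(\R)$, this is equivalent to $\rho\big(\xi^{(\ell-1)}(\,\cdot\,;\bn),\mu_{\alpha,\sigma_{\ell-1}}\big)\to0$ in probability, that is, to $\xi^{(\ell-1)}$ converging weakly, in probability, to $\mu_{\alpha,\sigma_{\ell-1}}$. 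Chaining the two equivalences gives the corollary.

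The substantive input is Lemma~\ref{lem:mixing measure} itself, proved separately; granting it, the remaining points are routine, the only ones worth care being the identification of $\xi^{(\ell-1)}$ with the directing random measure (immediate from \eqref{def:xi}, or from uniqueness of the directing measure, cf.\ \cite[Proposition~1.4]{kallenberg2006probabilistic}) and the reduction of the net limit \eqref{n to infinity} to sequential limits, which is legitimate precisely because $\Pr(\R)$, hence $\Pr(\Pr(\R))$, is separable and metrizable.
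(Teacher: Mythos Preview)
Your proposal is correct and follows essentially the same approach as the paper. The paper in fact does not give a separate proof of the corollary: it is stated as an immediate consequence of Lemma~\ref{lem:mixing measure} together with the remark preceding the corollary that the mixing measure of an i.i.d.\ sequence is a point mass; your write-up simply makes explicit the two routine points the paper leaves implicit, namely the reduction of the net limit \eqref{n to infinity} to sequences and the equivalence of convergence in distribution to a deterministic limit with convergence in probability in the metric space $\Pr(\bbR)$.
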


\begin{proof}[Proof of Lemma \ref{lem:mixing measure}]

First suppose $(\pi_j)_{j\in\N}$ converges to $\pi_\ff$. We want to show that $(\bX^{(j)})_{j\in\N}$ converges in distribution to $\bX^{(\ff)}$. By \cite[Theorem 4.29]{kallenberg2002foundations}, convergence in distribution of a sequence of random variables is equivalent to showing that for every $m > 0$ and all bounded continuous functions $f_1,\ldots,f_m$, we have $$\bbE\Bigr[f_1(X_1^{(j)})\cdots f_m(X_m^{(j)})\Bigr]\to\bbE\Bigr[f_1(X_1^{(\ff)})\cdots f_m(X_m^{(\ff)})\Bigr]$$ as $j \to \ff$.  Rewriting the above using \eqref{eq:definetti} we must show that as $j \to \ff$,
\[
\int_{\Pr(\R)} \left(\int_{\R^m} \prod_{i = 1}^m f_i(x_i)\, \nu^{\otimes m}(d\bx)\right) \pi_{j}(d\nu) 
\longrightarrow
\int_{\Pr(\R)} \left(\int_{\R^m} \prod_{i = 1}^m f_i(x_i)\,\nu^{\otimes m}(d\bx)\right) \pi_\ff(d\nu).
\]
But this follows since $\nu\mapsto \int_{\R^m} \prod_{i = 1}^m f_i(x_i)\, \nu^{\otimes m}(d\bx)$ is a bounded continuous function on $\Pr(\R)$ with respect to the weak topology.

We now prove the reverse direction. We assume $(\bX^{(j)})_{j\in\N}$ converges in distribution to $\bX^{(\ff)}$ and must show that $(\pi_j)_{j\in\N}$ converges to $\pi_\ff$. 

In order to show this we first claim that the family $(\pi_j)_{j\in\N}$ is tight. By \cite[Theorem 4.10]{kallenberg2017random} (see also \cite[Theorem A.6]{ghosal2017fundamentals}), such tightness is equivalent to the tightness of the expected measures $$\left(\int \nu^{\otimes \bbN}\,\pi_j(d\nu)\right)_{j\in\N}.$$ But these are just the distributions of the family  $(\bX^{(j)})_{j\in\N}$ which we have assumed converges in distribution. Hence, its distributions are tight.

Let us return now to proving $(\pi_j)_{j\in\N}$ converges to $\pi_\ff$. Suppose to the contrary that this is not the case. Since the family $(\pi_j)_{j\in\N}$ is tight, by Prokhorov's theorem there must be another limit point of this family, $\tilde\pi\neq\pi_\ff$, and a subsequence $(j_n)_{n\in\N}$ such that 
$$\pi_{j_n} \weak \tilde\pi$$
as $n \to \ff$. By the first part of our proof, this implies that  $(\bX^{(j_n)})_{n\in\N}$ converges in distribution to an exchangeable sequence with distribution $\int \nu^{\otimes \bbN}\,\tilde\pi(d\nu)$. However, by assumption  we have that $(\bX^{(j)})_{j\in\N}$ converges in distribution to $\bX^{(\ff)}$ which has distribution $\int \nu^{\otimes \bbN}\,\pi_\ff(d\nu)$. Thus, it must be that
$$\int \nu^{\otimes \bbN}\,\tilde\pi(d\nu) = \int \nu^{\otimes \bbN}\,\pi_\ff(d\nu).$$
But Proposition 1.4 in \cite{kallenberg2006probabilistic} tells us that the measure $\pi$ in \eqref{eq:definetti} is unique contradicting $\tilde\pi\neq\pi_\ff$. Thus, it must be that $(\pi_j)_{j\in\N}$ converges to $\pi_\ff$.
\end{proof}

\begin{lemma}\label{lem: ui WLLN} Let $\{X_{kn}: k \in \N\}$ be i.i.d. with $\E X_{1n}=0$ for each $n \in \N$. If the family $\{|X_{1n}|^p : n \in \N \}$ is uniformly integrable for some $p>1$, then as $n \to \infty$, we have $$S_n:=\frac{1}{n}\sum_{k=1}^n X_{kn} \to 0$$ in probability.
\end{lemma}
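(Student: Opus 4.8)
The plan is to carry out the classical truncation proof of the weak law for triangular arrays, using the extra moment $p>1$ to control the variance after truncation. Uniform integrability of $\{|X_{1n}|^{p}\}_{n}$ yields a finite constant $C:=\sup_{n}\E|X_{1n}|^{p}$, and this boundedness together with $\E X_{1n}=0$ is in fact all we shall use. I would first dispose of the case $p\ge 2$, where no truncation is needed: by Jensen's inequality $\E X_{1n}^{2}\le(\E|X_{1n}|^{p})^{2/p}\le C^{2/p}$, so $\E S_{n}^{2}=n^{-1}\var(X_{1n})\le C^{2/p}n^{-1}\to 0$, and Chebyshev's inequality finishes it. Hence one may assume $1<p\le 2$ from now on.

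For $1<p\le 2$, set $Y_{kn}:=X_{kn}\1_{\{|X_{kn}|\le n\}}$ and $\bar Y_{n}:=\frac1n\sum_{k=1}^{n}Y_{kn}$, and assemble three estimates. \emph{(i) Truncation is harmless}: by Markov's inequality, $\P(S_{n}\ne\bar Y_{n})\le n\,\P(|X_{1n}|>n)\le n\cdot Cn^{-p}=Cn^{1-p}\to 0$. \emph{(ii) The centering is negligible}: since $\E X_{1n}=0$, $|\E\bar Y_{n}|=|\E[X_{1n}\1_{\{|X_{1n}|>n\}}]|\le\E[|X_{1n}|\1_{\{|X_{1n}|>n\}}]\le n^{1-p}\E|X_{1n}|^{p}\le Cn^{1-p}\to 0$, using $|x|\le|x|^{p}n^{1-p}$ on $\{|x|>n\}$. \emph{(iii) The variance vanishes}: by independence, $\var(\bar Y_{n})=\frac1n\var(Y_{1n})\le\frac1n\E[X_{1n}^{2}\1_{\{|X_{1n}|\le n\}}]\le\frac1n\,n^{2-p}\E|X_{1n}|^{p}\le Cn^{1-p}\to 0$, using the complementary interpolation $x^{2}\le|x|^{p}n^{2-p}$ on $\{|x|\le n\}$. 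Combining (ii) and (iii) via Chebyshev's inequality gives $\bar Y_{n}\to 0$ in probability, and (i) then transfers this to $S_{n}$.

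There is no serious obstacle here; the only delicate point is step (iii), which is precisely where the hypothesis $p>1$ (rather than merely $p\ge 1$) is indispensable: the interpolation $x^{2}\le|x|^{p}n^{2-p}$ costs a factor $n^{2-p}$ with $2-p<1$, so after dividing a sum of $n$ truncated terms by $n$ one is still left with the negative power $n^{1-p}$. With only an $L^{1}$ bound the truncated second moment need not be $o(n)$ and the conclusion can fail, so this step cannot be softened. (Uniform integrability is invoked only to produce the uniform bound $C$; $\sup_{n}\E|X_{1n}|^{p}<\infty$ for some $p>1$ would already suffice.)
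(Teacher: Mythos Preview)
Your proof is correct, and it takes a genuinely different route from the paper's. The paper truncates at a \emph{fixed} level $M$: writing $X_{kn}=Y_{kn}+Z_{kn}$ with $Y_{kn}$ the centered truncation at level $M$ and $Z_{kn}$ the centered tail, the bounded piece is handled by Chebyshev ($\var Y_{1n}\le 4M^{2}$, so $T_n:=n^{-1}\sum_k Y_{kn}\to 0$), while the tail piece is bounded in $L^{p}$ via Jensen's inequality, $\E\bigl|n^{-1}\sum_{k}Z_{kn}\bigr|^{p}\le\E|Z_{1n}|^{p}$, and then uniform integrability of $\{|X_{1n}|^{p}\}_n$ is invoked to force $\sup_{n}\E|Z_{1n}|^{p}\to 0$ as $M\to\infty$. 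Your argument instead truncates at the \emph{growing} level $n$ and uses the interpolation inequalities $|x|\le|x|^{p}n^{1-p}$ on $\{|x|>n\}$ and $x^{2}\le|x|^{p}n^{2-p}$ on $\{|x|\le n\}$. As you correctly observe, your version needs only the weaker hypothesis $\sup_{n}\E|X_{1n}|^{p}<\infty$, so you have in fact proved a slightly stronger statement than the lemma asserts. The paper's route, on the other hand, is uniform in $p>1$ (no case split at $p=2$) and makes direct use of the full uniform-integrability hypothesis rather than only its moment-bound consequence.
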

\begin{proof} For $M>0$, let $$Y_{kn}:= X_{kn} \mathbf{1}_{[|X_{kn}| \leq M]}-\E \big( X_{kn} \mathbf{1}_{[|X_{kn}| \leq M]} \big), \ Z_{kn}:= X_{kn} \mathbf{1}_{[|X_{kn}| >  M]}-\E \big( X_{kn} \mathbf{1}_{[|X_{kn}| > M]} \big),$$ $$T_{n}:=\frac{1}{n}\sum_{k=1}^n Y_{kn}, \ U_{n}:=\frac{1}{n}\sum_{k=1}^n Z_{kn} .$$ By Markov's inequality,
$$ \P \big( |T_n| \geq \delta \big) \leq \frac{\mathbf{Var}\, Y_{1n}}{n \delta^2} \leq \frac{4 M^2}{n \delta^2},$$
and
$$ \P \big( |U_n| \geq \delta \big) \leq \frac{\E|U_n|^p}{\delta^{p}} \leq \frac{\E |Z_{1n}|^p}{\delta^p}.$$
Thus, we have
$$ \underset{n \to \infty}{{}\limsup {}}\P\big( |S_n| \geq 2 \delta \big) \leq \frac{1}{\delta^p} \underset{n}{{} \sup{} }\E |Z_{1n}|^p.$$
By the uniform integrability assumption, the right-hand side can be made arbitrarily small by increasing $M$.
\end{proof}

\section{Multivariate Stable Laws}\label{appendix: stable}

This section contains some basic definition and properties of multivariate stable distributions
which may help familiarize some readers. The material in this section comes from the monograph \cite{samorodnitsky1994stable} and also \cite{kuelbs1973representation}.

\begin{definition}\label{def: stable} A probability measure $\mu$ on $\R^k$ is said to be \textbf{(jointly) stable} if for all $a,b \in \R$ and two independent random variables $X$ and $Y$ with distribution $\mu$, there exist $c \in \R$ and $v \in \R^k$ such that
$$aX+bY \buildrel d \over = cX+v.$$
If $\mu$ is symmetric, then it is said to be \textbf{symmetric stable}.
\end{definition}

Similar to the one-dimensional case, there exists a constant $\alpha \in (0,2]$ such that $c^\alpha=a^\alpha+b^\alpha$ for all $a,b$, which we call the \textbf{index of stability}. The distribution $\mu$ is multivariate Gaussian in the case $\alpha=2$.

\begin{theorem}\label{spectral measure} Let $\alpha \in (0,2)$. A random variable $\bX$ taking values in $\R^k$ is symmetric stable if and only if there exists a finite symmetric measure $\Gamma$ on the unit sphere $S_{k-1}=\{ x \in \R^k : |x|=1 \}$ such that
	\begin{align}\label{eq: ch f of stable}
		\E \exp \Big( i\langle\mathbf{t},\bX\rangle \Big) = \exp \Big( -\int_{S_{k-1}} |\langle\mathbf{t},\mathbf{s}\rangle|^\alpha \Gamma(d \mathbf{s}) \Big)
	\end{align}
for all $\mathbf{t} \in \R^k.$ The measure $\Gamma$ is called the \textbf{spectral measure} of $\bX$, and the distribution is denoted as $\text{\sas}_{k}(\Gamma)$.
\end{theorem}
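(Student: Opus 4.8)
The plan is to prove the two implications separately: the ``if'' direction by an explicit construction followed by a weak-approximation/continuity argument, and the ``only if'' direction by reducing to the one-dimensional theory of stable laws and then invoking the L\'evy--Khintchine representation. Throughout, the key structural fact exploited is that $\bft\mapsto\int_{S_{k-1}}|\langle\bft,s\rangle|^{\alpha}\,\Gamma(ds)$ is a nonnegative, continuous, $\alpha$-homogeneous function of $\bft\in\R^{k}$.

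For the ``if'' direction, I would first treat a finitely supported symmetric measure $\Gamma=\sum_{j=1}^{m}\gamma_{j}(\delta_{s_{j}}+\delta_{-s_{j}})$ with $\gamma_{j}>0$ and $s_{j}\in S_{k-1}$. Taking $Z_{1},\dots,Z_{m}$ i.i.d.\ one-dimensional symmetric $\alpha$-stable with $\E e^{itZ_{j}}=e^{-|t|^{\alpha}}$ and setting $\bX:=\sum_{j}(2\gamma_{j})^{1/\alpha}Z_{j}s_{j}$, a one-line computation of the characteristic function gives $\E e^{i\langle\bft,\bX\rangle}=\exp(-\int_{S_{k-1}}|\langle\bft,s\rangle|^{\alpha}\,\Gamma(ds))$; symmetry of $\bX$ is immediate, and $\alpha$-homogeneity yields, for independent copies $\bX,\bY$ and scalars $a,b$, that $a\bX+b\bY\ed c\bX$ with $c=(|a|^{\alpha}+|b|^{\alpha})^{1/\alpha}$, so $\bX$ is symmetric stable. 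For a general finite symmetric $\Gamma$, I would approximate it weakly by finitely supported symmetric measures $\Gamma_{n}$ with $\Gamma_{n}(S_{k-1})\to\Gamma(S_{k-1})$; since $s\mapsto|\langle\bft,s\rangle|^{\alpha}$ is bounded and continuous on the compact sphere, the corresponding characteristic functions converge pointwise to $\exp(-\int|\langle\bft,s\rangle|^{\alpha}\,\Gamma(ds))$, which is continuous at the origin, so by L\'evy's continuity theorem this is the characteristic function of some $\bX$, and symmetry and stability pass to the limit in the identities above.

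For the ``only if'' direction, suppose $\bX$ is symmetric stable of index $\alpha\in(0,2)$. The first step is that every one-dimensional projection $\langle\bft,\bX\rangle$ is symmetric (from $\bX\ed-\bX$) and, projecting the stability identity from Definition~\ref{def: stable} onto $\bft$, is one-dimensional stable of the same index; a symmetric one-dimensional stable law has neither shift nor skewness, so $\langle\bft,\bX\rangle$ is $\alpha$-stable with some scale $\lambda(\bft)\ge0$, whence $\Psi(\bft):=-\log\E e^{i\langle\bft,\bX\rangle}=\lambda(\bft)^{\alpha}$ is nonnegative, continuous, and satisfies $\Psi(r\bft)=|r|^{\alpha}\Psi(\bft)$. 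The second step is that $\bX$ is infinitely divisible: from $\bX_{1}+\dots+\bX_{n}\ed n^{1/\alpha}\bX+v_{n}$ and symmetry one gets $v_{n}=0$, so $\bX$ is the $n$-fold convolution of the law of $n^{-1/\alpha}\bX$. By the L\'evy--Khintchine formula, $\Psi(\bft)=\tfrac12\langle\bft,Q\bft\rangle+\int_{\R^{k}}(1-\cos\langle\bft,x\rangle)\,\nu(dx)$ for a symmetric nonnegative matrix $Q$ and a symmetric L\'evy measure $\nu$; comparing with $\Psi(r\bft)=|r|^{\alpha}\Psi(\bft)$ and using uniqueness of the L\'evy--Khintchine triple forces $Q=0$ (its contribution scales like $r^{2}$ and $\alpha<2$) and $\nu(rA)=r^{-\alpha}\nu(A)$ for $r>0$, i.e.\ in polar coordinates $\nu(dr\,ds)=r^{-1-\alpha}\,dr\,\sigma(ds)$ for a finite symmetric measure $\sigma$ on $S_{k-1}$. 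Performing the radial integral, $\int_{0}^{\infty}(1-\cos(r\langle\bft,s\rangle))r^{-1-\alpha}\,dr=C_{\alpha}|\langle\bft,s\rangle|^{\alpha}$ with $C_{\alpha}:=\int_{0}^{\infty}(1-\cos u)u^{-1-\alpha}\,du\in(0,\infty)$, so $\Psi(\bft)=C_{\alpha}\int_{S_{k-1}}|\langle\bft,s\rangle|^{\alpha}\,\sigma(ds)$ and $\Gamma:=C_{\alpha}\sigma$ is the desired finite symmetric spectral measure; uniqueness of $\Gamma$ among symmetric measures follows from uniqueness of $\nu$ (equivalently, injectivity of $\sigma\mapsto(\bft\mapsto\int|\langle\bft,s\rangle|^{\alpha}\,\sigma(ds))$ on symmetric measures).

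I expect the main obstacle to be the ``only if'' direction, and within it the passage from the scaling relation for $\Psi$ to the \emph{exact} $\alpha$-homogeneity of the L\'evy measure together with the vanishing of the Gaussian part; this relies on the uniqueness of the L\'evy--Khintchine representation and a careful polar decomposition of $\nu$. A secondary technical point is the treatment of degenerate cases --- when $\bX$ is supported on a proper subspace of $\R^{k}$ (so that $\Gamma$ concentrates on a sub-sphere) or when some projections $\langle\bft,\bX\rangle$ vanish --- which are handled by restricting attention to the linear span of the support of $\bX$. The remaining steps are routine manipulations of characteristic functions.
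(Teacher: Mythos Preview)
The paper does not prove this theorem. It appears in Appendix~B, which opens with ``The material in this section comes from the monograph \cite{samorodnitsky1994stable} and also \cite{kuelbs1973representation}'', and Theorem~\ref{spectral measure} is stated there without proof as a standard result quoted from those references. So there is no ``paper's own proof'' to compare your proposal against.

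That said, your sketch is a correct outline of the standard argument (essentially the one in Samorodnitsky--Taqqu). The ``if'' direction via the explicit construction $\bX=\sum_{j}(2\gamma_{j})^{1/\alpha}Z_{j}s_{j}$ for discrete $\Gamma$, followed by weak approximation on the compact sphere and L\'evy's continuity theorem, is clean and correct. For the ``only if'' direction, reducing to one-dimensional projections to obtain the $\alpha$-homogeneity of $\Psi$, then using the L\'evy--Khintchine representation together with uniqueness of the triple to force $Q=0$ and the polar form $\nu(dr\,ds)=r^{-1-\alpha}\,dr\,\sigma(ds)$, is exactly the standard route. The points you flag as potential obstacles---the scaling argument for $\nu$ and the degenerate/subspace cases---are genuine but well-known, and your indicated treatment of them is adequate at the level of a sketch.
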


In the case $k=1$, the measure $\Gamma$ is always of the form $c \delta_1 + c \delta_{-1}$. Thus, the characteristic function reduces to the familiar form
$$ \E e^{itX} = e^{-|\sigma t|^\alpha}.$$

\subsection*{Acknowledgments.}{\small We thank François Caron and Juho Lee for sugggesting the paper \cite{favaro2020stable} to us. PJ and HL were funded in part by the National
	Research Foundation of Korea (NRF) grant NRF-2017R1A2B2001952. PJ, HL, and JL were funded in part by  the National
	Research Foundation of Korea (NRF) grant 
	NRF-2019R1A5A1028324. HY was supported by the Engineering Research Center Program through the National Research Foundation of Korea (NRF) funded by the Korean Government MSIT (NRF-2018R1A5A1059921), and also by Next-Generation Information Computing Development Program through the National Research Foundation of Korea (NRF) funded by the Ministry of Science, ICT (2017M3C4A7068177).}
\bibliographystyle{alpha}
\bibliography{bib}
\end{document}